\newtheorem{theorem}{Theorem}[section]
\newtheorem{proposition}{Proposition}[section]
\newtheorem{lemma}{Lemma}[section]
\newtheorem{definition}{Definition}[section]
\newtheorem{corollary}{Corollary}[theorem]
\newcommand{\tcr}{\textcolor{black}}
\def\eqref#1{equation~\ref{#1}}
\def\1{\bm{1}}
\DeclareMathAlphabet{\mathsfit}{\encodingdefault}{\sfdefault}{m}{sl}
\SetMathAlphabet{\mathsfit}{bold}{\encodingdefault}{\sfdefault}{bx}{n}
\title{Return-Based Contrastive Representation Learning for Reinforcement Learning}
\author{Guoqing Liu$^{1,*}$, Chuheng Zhang$^{2,}$\thanks
{Author contribution: Guoqing Liu implemented the algorithm, optimized the code, and analyzed the experiment results. Chuheng Zhang proposed the theoretical framework, designed the algorithm, and proved the theorem. Li Zhao initialized the idea and provided suggestions for the project. Other co-authors participated in the discussion and/or paper writing. Correspondence to: Li Zhao: lizo@microsoft.com.}
\,\,, Li Zhao$^3$, \\ 
\textbf{Tao Qin}$^3$\textbf{,} \textbf{Jinhua Zhu}$^1$\textbf{,} \textbf{Jian Li}$^2$\textbf{,} \textbf{Nenghai Yu}$^1$\textbf{,} \textbf{Tie{-}Yan Liu}$^3$\\
$^1$University of Science and Technology of China\\ 
$^2$IIIS, Tsinghua University\\
$^3$Microsoft Research\\
\texttt{lgq1001@mail.ustc.edu.cn},\; \texttt{zhangchuheng123@live.com}\\
\texttt{\{lizo, taoqin, tyliu\}@microsoft.com},\; \texttt{teslazhu@mail.ustc.edu.cn}\\
\texttt{lijian83@mail.tsinghua.edu.cn},\; \texttt{ynh@ustc.edu.cn}\\
}
\begin{document}

\maketitle

\begin{abstract}
Recently, various auxiliary tasks have been proposed to accelerate representation learning and improve sample efficiency in deep reinforcement learning (RL).
However, existing auxiliary tasks do not take the characteristics of RL problems into consideration and are unsupervised. By leveraging returns, the most important feedback signals in RL, we propose a novel auxiliary task that forces the learnt representations to discriminate state-action pairs with different returns. Our auxiliary loss is theoretically justified to learn representations that capture the structure of a new form of state-action abstraction, under which state-action pairs with similar return distributions are aggregated together. 
\tcr{In low data regime}, our algorithm outperforms strong baselines on complex tasks in Atari games and DeepMind Control suite, and achieves even better performance when combined with existing auxiliary tasks.
\end{abstract}

\section{Introduction}

Deep reinforcement learning~(RL) algorithms can learn representations from high-dimensional inputs,
{as well as learn policies} based on such representations to maximize long-term returns simultaneously.
However,
{deep RL algorithms} typically require
large numbers of samples, which can be quite expensive to obtain
~\citep{mnih2015human}.
In contrast, it is usually much more sample efficient to learn
{policies} with learned representations/extracted features~\citep{srinivas2020curl}.
To this end, various auxiliary tasks have been proposed to accelerate representation learning 
{in aid of}
the main RL task ~\citep{suddarth1990rule,sutton2011horde,gelada2019deepmdp,bellemare2019geometric,franccois2019combined,shen2020auxiliary,zhang2020learning,dabney2020value,srinivas2020curl}.
Representative examples of auxiliary tasks include predicting the future in either the pixel space or the latent space with reconstruction-based losses~\citep[e.g., ][]{jaderberg2016reinforcement,hafner2019dream,hafner2019learning}. 

Recently, contrastive learning has been introduced to construct auxiliary tasks and achieves better performance compared to reconstruction based methods in accelerating RL algorithms~\citep{oord2018representation,srinivas2020curl}. 
Without the need to reconstruct inputs such as raw pixels, contrastive learning based methods can ignore irrelevant features such as static background in games and learn more compact representations. 
\cite{oord2018representation} propose a contrastive representation learning method based on the temporal structure of state sequence.
\cite{srinivas2020curl} propose to leverage 
the prior knowledge from computer vision, learning representations that are
{invariant to image augmentation.} 
{However, existing works mainly construct contrastive auxiliary losses
in an unsupervised manner, 
without considering feedback signals in RL problems as supervision. 
}


In this paper, we take a further step to 
{leverage the return feedback}
to design a contrastive auxiliary loss to accelerate RL algorithms. 
Specifically, we propose a novel method, called Return-based Contrastive representation learning for Reinforcement Learning (RCRL). 
In our method, given an anchor state-action pair, we choose
a state-action pair with the same or similar return as the positive sample, and a state-action pair with different return as the negative sample.
{Then, we train a discriminator to classify between positive and negative samples given the anchor based on their representations as the auxiliary task.}
The intuition here is to learn state-action representations that capture return-relevant features while \tcr{ignoring} return-irrelevant features. 

{From a theoretical perspective, RCRL is supported by a novel state-action abstraction, called $Z^\pi$-irrelevance.}
$Z^\pi$-irrelevance abstraction aggregates 
state-action pairs
{with similar return distributions}
under certain policy $\pi$.
{We show that $Z^\pi$-irrelevance abstraction can reduce the size of the state-action space (cf. Appendix \ref{app:irrelevance}) as well as 
approximate the Q values arbitrarily accurately
(cf. Section \ref{sec:Zpi-irrelevance}).}
We further propose a method called Z-learning that can calculate $Z^\pi$-irrelevance abstraction with sampled returns rather than the return distribution, which is hardly available in practice.
{Z-learning can learn $Z^\pi$-irrelevance abstraction provably efficiently.}
Our algorithm {RCRL can be seen as the empirical version of Z-learning by making a few approximations such as 
integrating with deep RL algorithms, and collecting positive pairs
within a consecutive segment in a trajectory of the anchors. }


We conduct experiments on Atari games \citep{bellemare2013arcade} and DeepMind Control suite \citep{tassa2018deepmind} in low data regime.
The experiment results show that our auxiliary task combined with Rainbow \citep{hessel2017rainbow} for discrete control tasks or SAC \citep{haarnoja2018soft} for continuous control tasks achieves superior performance over other 
state-of-the-art baselines for this regime.
Our method can be further combined with existing unsupervised contrastive learning methods to achieve even better performance. 
We also perform a detailed analysis on how the representation changes during training with/without our auxiliary loss.
We find that a good embedding network assigns similar/dissimilar representations to state-action pairs with similar/dissimilar return distributions, and our algorithm can boost such generalization and speed up training.

Our contributions are summarized as follows:
\begin{itemize}
    \item We introduce a novel contrastive loss based on return, to learn return-relevant representations and speed up deep RL algorithms.
    \item We theoretically build the connection between the contrastive loss and a new form of state-action abstraction, which can reduce the size of the state-action space as well as approximate the Q values arbitrarily accurately.
    \item Our algorithm achieves superior performance against strong baselines in Atari games and DeepMind Control suite \tcr{in low data regime}. Besides, the performance can be further enhanced when combined with existing auxiliary tasks.
\end{itemize}

\section{Related Work}

\subsection{Auxiliary task}
In reinforcement learning, the auxiliary task can be used for both the model-based setting and the model-free setting.
In the model-based settings, world models can be used as auxiliary tasks and lead to better performance, such as CRAR~\citep{franccois2019combined},
Dreamer~\citep{hafner2019dream}, and PlaNet~\citep{hafner2019learning}.
Due to the complex components (e.g., the latent transition or reward module) in the world model, such methods are empirically unstable to train and relies on different regularizations to converge.
In the model-free settings, many algorithms construct various auxiliary tasks to improve performance, such as predicting the future \citep{jaderberg2016reinforcement,shelhamer2016loss,guo2020bootstrap,lee2020predictive,mazoure2020deep}, 
learning value functions with different rewards or under different policies~\citep{veeriah2019discovery,schaul2015universal,borsa2018universal,bellemare2019geometric,dabney2020value}, learning from many-goals~\citep{veeriah2018many},
or the combination of different auxiliary objectives
~\citep{de2018integrating}.
Moreover, auxiliary tasks can be designed based on the prior knowledge about the environment 
{\citep{mirowski2016learning,shen2020auxiliary,van2020mdp}} 
or the raw state representation
~\citep{srinivas2020curl}.
\citet{hessel2019multi} also apply auxiliary task to the multi-task RL setting.


Contrastive learning has seen dramatic progress recently, 
and been introduced to learn state representation~\citep{oord2018representation,sermanet2018time,dwibedi2018learning,aytar2018playing,anand2019unsupervised,srinivas2020curl}.
Temporal structure~\citep{sermanet2018time, aytar2018playing} and local spatial structure~\citep{anand2019unsupervised} has been leveraged for state representation learning via contrastive losses. 
CPC~\citep{oord2018representation} and CURL~\citep{srinivas2020curl} adopt a contrastive auxiliary tasks to accelerate representation learning and speed up main RL tasks,
{by leveraging the temporal structure and image augmentation respectively. }
{
To the best of our knowledge, we are the first to leverage return to construct a contrastive auxiliary task for speeding up the main RL task.}



\subsection{Abstraction}

{
State abstraction (or state aggregation) aggregates states by ignoring irrelevant state information. By reducing state space, state abstraction can enable efficient policy learning.
Different types of abstraction are proposed in literature, ranging from fine-grained to coarse-grained abstraction, each reducing state space to a different extent. 
Bisimulation or model irrelevance \citep{dean1997model,givan2003equivalence} define state abstraction under which both transition and reward function are kept invariant.
By contrast, other types of state abstraction that are coarser than bisimulation such as $Q^\pi$ irrelevance or $Q^*$ irrelevance \citep[see e.g.,][]{li2006towards}, which keep the Q function invariant under any policy $\pi$ or the optimal policy respectively. 
There are also some works on state-action abstractions, e.g., MDP homomorphism \citep{ravindran2003smdp,ravindran2004algebraic}
\tcr{and approximate MDP homomorphism \citep{ravindran2004approximate,taylor2009bounding}}
, which are similar to bisimulation in keeping reward and transition invariant, but extending bisimulation from state abstraction to state-action abstraction.
}

{
In this paper, we consider a new form of state-action abstraction $Z^\pi$-irrelevance, which 
aggregates state-action pairs with the same return distribution and 
is coarser than bisimulation or homomorphism which are frequently used as auxiliary tasks \tcr{\citep[e.g., ][]{biza2018online,gelada2019deepmdp,zhang2020learning}}.}
However, it is worth noting that $Z^\pi$-irrelevance is only used to build the theoretical foundation of our algorithm, and show that our proposed auxiliary task is well-aligned with the main RL task. Representation learning in deep RL is in general very different from aggregating states in tabular case, though the latter may build nice theoretical foundation for the former. Here we focus on how to design auxiliary tasks to accelerate representation learning using contrastive learning techniques, and we propose a novel return-based contrastive method based on our proposed $Z^\pi$-irrelevance abstraction.

\section{Preliminary}
\label{sec:preliminary}

We consider a Markov Decision Process (MDP) which is a tuple $(\mathcal{S}, \mathcal{A}, P, R, \mu, \gamma)$ specifying the state space $\mathcal{S}$, the action space $\mathcal{A}$, the state transition probability $P(s_{t+1}|s_t,a_t)$, the reward $R(r_t|s_t, a_t)$, the initial state distribution $\mu\in\Delta^\mathcal{S}$ and the discount factor $\gamma$. 
Also, we denote $x:=(s,a)\in\mathcal{X}:=\mathcal{S}\times\mathcal{A}$ to be the state-action pair.
A (stationary) policy $\pi: \mathcal{S} \to \Delta^\mathcal{A}$ specifies the action selection probability on each state.
Following the policy $\pi$, the discounted sum of future rewards (or return) is denoted by the random variable
$Z^\pi(s,a) = \sum_{t=0}^\infty \gamma^t R(s_t, a_t)$, where $s_0=s, a_0 = a, s_t \sim P(\cdot|s_{t-1}, a_{t-1})$, and $a_t \sim \pi(\cdot|s_t)$.
We divide the range of return into $K$ equal bins 
$\{R_0=R_{\min}, R_1, \cdots, R_K=R_{\max} \}$ such that 
$R_k- R_{k-1} = (R_{\max}-R_{\min})/K, \forall k \in [K]$, where $R_{\min}$ (resp. $R_{\max}$) is the minimum (reps. maximum) possible return, and $[K]:=\{1, 2, \cdots, K\}$.
We use $b(R)=k\in [K]$ to denote the event that 
$R$ falls into the $k$th bin, i.e., $R_{k-1} < R \le R_{k}$.
Hence, $b(R)$ can be viewed as the discretized version of the return, and the distribution of discretized return can be represented by a $K$-dimensional vector $\mathbb{Z}^\pi(x) \in \Delta^K$, where the $k$-th element equals to $\text{Pr}[R_{k-1}<Z^\pi(x) \le R_k]$.
The Q function is defined as 
$Q^\pi(x) = \mathbb{E}[Z^\pi(x)]$, and the state value function is defined as $V^\pi(s) = \mathbb{E}_{a\sim\pi(\cdot|s)}[Z^\pi(s,a)]$.
The objective for RL is to find a policy $\pi$ that maximizes the expected cumulative reward $J(\pi) = \mathbb{E}_{s\sim\mu} [V^\pi(s)]$.
We denote the optimal policy as $\pi^*$ and the corresponding optimal Q function as $Q^*:=Q^{\pi^*}$.

\section{Methodology}
\label{sec:method}

In this section, we present our method, from both theoretical and empirical perspectives.
First, we propose $Z^\pi$-irrelevance, a new form of state-action abstraction based on return distribution. 
We show that the Q functions for any policy (and therefore the optimal policy) can be represented under $Z^\pi$-irrelevance abstraction.
Then we consider an algorithm, Z-learning, that enables us to learn $Z^\pi$-irrelevance abstraction from the samples collected using $\pi$. 
Z-learning is simple and learns the abstraction by only minimizing a contrastive loss.
We show that Z-learning can learn $Z^\pi$-irrelevance abstraction provably efficiently.
After that, we introduce return-based contrastive representation learning for RL~(RCRL) that incorporates standard RL algorithms with an auxiliary task adapted from Z-learning. 
At last, we present our network structure for learning state-action embedding, upon which RCRL is built.

\subsection{$Z^\pi$-irrelevance Abstraction}
\label{sec:Zpi-irrelevance}
A state-action abstraction aggregates the state-action pairs with similar properties, resulting in an abstract state-action space denoted as $[N]$,
{where $N$ is the size of abstract state-action space}.
In this paper, we consider a new form of abstraction, $Z^\pi$-irrelevance, defined as follows:
Given a policy $\pi$, $Z^\pi$-irrelevance abstraction is denoted as $\phi: \mathcal{X}\to[N]$ such that,
for any $x_1, x_2 \in \mathcal{X}$ with $\phi(x_1)=\phi(x_2)$, we have $\mathbb{Z}^\pi(x_1) = \mathbb{Z}^\pi(x_2)$.
Given a policy $\pi$ and the parameter for return discretization $K$, we use
$N_{\pi,K}$ to denote the minimum $N$ such that a $Z^\pi$-irrelevance exists.
It is true that
$N_{\pi,K}\le N_{\pi,\infty} \le |\phi_B(\mathcal{S})|\, |\mathcal{A}|$ for any $\pi$ and $K$,
where $|\phi_B(\mathcal{S})|$ is the number of abstract states for the coarsest bisimulation (cf. Appendix \ref{app:irrelevance}).


\begin{proposition}
\label{prop:aligned}
Given a policy $\pi$ and any $Z^\pi$-irrelevance $\phi:\mathcal{X}\to [N]$, there exists a function ${Q}: [N] \to \mathbb{R}$ such that $|{Q}({\phi}(x)) - Q^\pi(x)| \le \frac{R_{\max}-R_{\min}}{K}, \forall x \in \mathcal{X}$. 
\end{proposition}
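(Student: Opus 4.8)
The plan is to construct $Q$ explicitly by replacing the (continuous) return with a piecewise-constant proxy that is measurable with respect to the bin index $b(\cdot)$, so that it only depends on the discretized return distribution $\mathbb{Z}^\pi$. Concretely, I would fix for each bin $k\in[K]$ a representative value $c_k\in(R_{k-1},R_k]$ (the midpoint $c_k=(R_{k-1}+R_k)/2$ works), and for an abstract state-action $n\in[N]$ pick any $x_n\in\phi^{-1}(n)$ and set $Q(n):=\sum_{k=1}^{K}\big(\mathbb{Z}^\pi(x_n)\big)_k\, c_k$, where $\big(\mathbb{Z}^\pi(x)\big)_k=\Pr[R_{k-1}<Z^\pi(x)\le R_k]$. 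The first thing to check is that this is well defined: if $\phi(x_n)=\phi(x_n')$, then by the definition of $Z^\pi$-irrelevance $\mathbb{Z}^\pi(x_n)=\mathbb{Z}^\pi(x_n')$, so the value does not depend on the chosen representative.

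Next I would expand $Q^\pi(x)=\mathbb{E}[Z^\pi(x)]$ by conditioning on which bin the return falls into. Writing $p_k:=\big(\mathbb{Z}^\pi(x)\big)_k$ and, for the $k$ with $p_k>0$, $m_k:=\mathbb{E}[Z^\pi(x)\mid R_{k-1}<Z^\pi(x)\le R_k]$, the tower rule gives $Q^\pi(x)=\sum_{k}p_k m_k$. Since $\phi(x)$ has the same discretized return distribution as $x$, we also have $Q(\phi(x))=\sum_{k}p_k c_k$, and subtracting yields
\[
\big|Q(\phi(x))-Q^\pi(x)\big|=\Big|\sum_{k}p_k\,(c_k-m_k)\Big|\le\sum_{k}p_k\,|c_k-m_k| .
\]
Then I would bound each term: both $c_k$ and $m_k$ lie in $(R_{k-1},R_k]$ — $c_k$ by construction and $m_k$ because it is a conditional average of values in that interval — so $|c_k-m_k|\le R_k-R_{k-1}=(R_{\max}-R_{\min})/K$ by the equal-bin construction. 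Combining this with $\sum_k p_k=1$ gives $|Q(\phi(x))-Q^\pi(x)|\le (R_{\max}-R_{\min})/K$ for every $x\in\mathcal{X}$, which is the claim; the specialization to $\pi=\pi^*$ is immediate.

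There is no substantive obstacle here. The only points that need care are (i) the well-definedness of $Q$ on the abstract space, which is exactly what $Z^\pi$-irrelevance buys us, and (ii) the bookkeeping for zero-probability bins, for which $m_k$ is irrelevant since the term $p_k(c_k-m_k)$ vanishes and can simply be dropped from every sum. (Incidentally, using the midpoints one could sharpen the bound to $(R_{\max}-R_{\min})/(2K)$, but that refinement is not needed for the stated proposition.)
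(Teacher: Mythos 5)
Your proof is correct. It takes the same basic route as the paper—construct $Q$ explicitly on $[N]$ and bound the error by the bin width—but with a slightly different construction: the paper sets $Q(z)\leftarrow Q^\pi(x')$ for an arbitrary representative $x'\in\phi^{-1}(z)$ and then asserts $|Q^\pi(x')-Q^\pi(x)|\le \frac{R_{\max}-R_{\min}}{K}$ whenever $\mathbb{Z}^\pi(x')=\mathbb{Z}^\pi(x)$, whereas you define $Q(z)$ as the midpoint-weighted sum $\sum_k (\mathbb{Z}^\pi(x))_k c_k$, which depends only on the discretized distribution and so is manifestly well defined. Your tower-rule decomposition $Q^\pi(x)=\sum_k p_k m_k$ with $m_k\in(R_{k-1},R_k]$ is exactly the argument needed to justify the inequality the paper states without detail (applied there to two conditional means in the same bin rather than to a conditional mean and a midpoint), so your write-up is, if anything, more complete; it also yields the sharper constant $\frac{R_{\max}-R_{\min}}{2K}$, which the paper's representative-based construction does not.
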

We provide a proof in Appendix \ref{app:irrelevance}. 
Note that $K$ controls the coarseness of the abstraction.
When $K\to\infty$, $Z^\pi$-irrelevance can accurately represent the value function and therefore the optimal policy when $\pi\to\pi^*$.
When using an auxiliary task to learn such abstraction, this proposition indicates
that the auxiliary task (to learn a $Z^\pi$-irrelevance) is well-aligned with the main RL task (to approximate $Q^*$).
However, large $K$ results in a fine-grained abstraction which requires us to use a large $N$ and more samples to learn the abstraction (cf. Theorem \ref{theorem:main}).
In practice, this may not be a problem since we learn a state-action representation in a low-dimensional space $\mathbb{R}^d$ instead of $[N]$ and reuse the samples collected by the base RL algorithm.
Also, we do not need to choose a $K$ explicitly in the practical algorithm (cf. Section \ref{sec:RCRL}).

\subsection{Z-Learning}

\begin{algorithm}[t]
\caption{Z-learning}
\label{algo:z-learning}
\begin{algorithmic}[1]
\STATE Given the policy $\pi$, the number of bins for the return $K$, a constant $N \ge N_{\pi, K}$, 
the encoder class $\Phi_N$, the regressor class $\mathcal{W}_N$,
and a distribution $d\in\Delta^\mathcal{X}$ with $\text{supp}(d)=\mathcal{X}$
\STATE $\mathcal{D}=\emptyset$
\FOR {$i=1, \cdots, n$}
    \STATE $x_1, x_2 \sim d$
    \STATE $R_1 \sim Z^\pi(x_1), R_2 \sim Z^\pi(x_2)$
    \STATE $\mathcal{D} = \mathcal{D} \cup \{(x_1, x_2, y=\mathbb{I}[b(R_1) \neq b(R_2)])\}$
\ENDFOR
\STATE \tcr{$(\hat{\phi}, \hat{w}) = \text{arg} \min_{\phi \in \Phi_N, w \in \mathcal{W}_N} \mathcal{L}(\phi, w; \mathcal{D})$, where $\mathcal{L}(\phi, w;\mathcal{D})$ is defined in (\ref{eq:loss_function})}
\STATE \textbf{return } the encoder $\hat{\phi}$
\end{algorithmic}
\end{algorithm}


We propose Z-learning to learn 
$Z^\pi$-irrelevance based
on a dataset $\mathcal{D}$ with a contrastive loss (see Algorithm \ref{algo:z-learning}).
Each tuple in the dataset is collected as follows:
First, two state-action pairs are drawn i.i.d. from a distribution $d\in\Delta^\mathcal{X}$ with $\text{supp}(d)=\mathcal{X}$ (cf. Line 4 in Algorithm \ref{algo:z-learning}).
In practice, we can sample state-action pairs from the rollouts generated by the policy $\pi$.
In this case, a stochastic policy (e.g., using $\epsilon$-greedy) with a standard ergodic assumption on MDP ensures $\text{supp}(d)=\mathcal{X}$.
Then, we obtain returns for the two state-action pairs (i.e., the discounted sum of the rewards after $x_1$ and $x_2$) which can be obtained by rolling out using the policy $\pi$ (cf. Line 5 in Algorithm \ref{algo:z-learning}).
The binary label $y$ for this state-action pair indicates whether the two returns belong to the same bin (cf. Line 6 in Algorithm \ref{algo:z-learning}).
The contrastive loss is defined as follows:
\begin{equation}
\label{eq:loss_function}
    \min_{
    \phi \in \Phi_N, w\in\mathcal{W}_N
    } \mathcal{L}(\phi, w;\mathcal{D}) := \mathbb{E}_{(x_1, x_2, y)\sim \mathcal{D}} \left[ (w(\phi(x_1), \phi(x_2)) - y)^2 \right],
\end{equation}
\tcr{
where the class of encoders that map the state-action pairs to $N$ discrete abstractions is defined as $\Phi_N:=\{\mathcal{X}\to[N]\}$,
and the class of tabular regressors is defined as $\mathcal{W}_N:=\{[N]\times[N]\to[0,1]\}$.}
Notice that we choose $N\ge N_{\pi,K}$ to ensure that a $Z^\pi$-irrelevance $\phi:\mathcal{X}\to [N]$ exists.
Also, to aggregate the state-action pairs, $N$ should be smaller than $|\mathcal{X}|$ (otherwise we will obtain an identity mapping).
In this case, mapping two state-action pairs with different return distributions to the same abstraction will increase the loss and therefore is avoided.
The following theorem shows that Z-learning can learn $Z^\pi$-irrelevance provably efficiently.


\begin{theorem}
\label{theorem:main}
Given the encoder $\hat{\phi}$ returned by Algorithm \ref{algo:z-learning}, the following inequality holds with probability $1-\delta$ 
\tcr{and for any $x'\in\mathcal{X}$}:
\begin{equation}
\begin{aligned}
\mathbb{E}_{x_1 \sim d, x_2\sim d} & \Big[ \mathbb{I}[\hat{\phi}(x_1)=\hat{\phi}(x_2)] \, \Big|
 \mathbb{Z}^\pi(x')^T \left( \mathbb{Z}^\pi(x_1) - \mathbb{Z}^\pi(x_2) \right)
\Big| \Big] \\
& \quad \quad \le \sqrt{\frac{8N}{n} \Big( 3 + 4N^2 \ln n + 4 \ln |\Phi_N| + 4 \ln (\frac{2}{\delta}) \Big) },
\end{aligned}
\end{equation}
where $|\Phi_N|$ is the cardinality of encoder function class and $n$ is the size of the dataset.
\end{theorem}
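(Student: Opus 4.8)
The plan is to follow the standard generalization-error argument for contrastive / abstraction learning: first bound the population contrastive risk of the empirical minimizer $(\hat\phi,\hat w)$ by a uniform-convergence argument over the finite class $\Phi_N\times\mathcal{W}_N$, then translate a small population risk into the desired statement about collisions $\mathbb{I}[\hat\phi(x_1)=\hat\phi(x_2)]$ weighted by the return-distribution discrepancy. To make the uniform-convergence step work I would first argue that the regressor class, though nominally $[0,1]$-valued on $[N]\times[N]$, can be discretized to a $1/n$-net of size at most $n^{N^2}$ without changing the analysis by more than $O(1/n)$ terms; this is where the $N^2\ln n$ term in the bound comes from, and $\ln|\Phi_N|$ is the log-cardinality of the encoder class. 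A Bernstein/Hoeffding bound for bounded random variables combined with a union bound over $|\Phi_N|\cdot n^{N^2}$ hypotheses then gives, with probability $1-\delta$, that $\mathcal{L}(\hat\phi,\hat w)\le \mathcal{L}(\phi^\star,w^\star) + O\!\big(\sqrt{\tfrac{1}{n}(N^2\ln n+\ln|\Phi_N|+\ln(1/\delta))}\big)$, where $(\phi^\star,w^\star)$ is the realizable pair from a true $Z^\pi$-irrelevance abstraction together with the Bayes-optimal regressor on the abstract pairs.

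The second, more conceptual, step is to identify the Bayes-optimal predictor and lower-bound the \emph{excess} risk by the quantity appearing on the left-hand side. Given the label $y=\mathbb{I}[b(R_1)\ne b(R_2)]$ with $R_1\sim Z^\pi(x_1)$, $R_2\sim Z^\pi(x_2)$ independently, the conditional expectation is $\mathbb{E}[y\mid x_1,x_2]=1-\mathbb{Z}^\pi(x_1)^T\mathbb{Z}^\pi(x_2)$, so the regression function depends on $(x_1,x_2)$ only through $(\mathbb{Z}^\pi(x_1),\mathbb{Z}^\pi(x_2))$. Because $\mathcal{W}_N$ ranges over all tabular functions on $[N]\times[N]$, for a fixed encoder $\phi$ the optimal $w$ is the conditional mean of $y$ given the pair of abstract symbols $(\phi(x_1),\phi(x_2))$, and the usual bias–variance identity gives $\mathcal{L}(\phi,\hat w_\phi)-\mathcal{L}(\phi^\star,w^\star)=\mathbb{E}\big[(\,\mathbb{E}[y\mid x_1,x_2]-\mathbb{E}[y\mid \phi(x_1),\phi(x_2)]\,)^2\big]$. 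A Jensen/variance-decomposition argument then shows this excess risk is at least of order $\mathbb{E}_{x_1,x_2}\big[\mathbb{I}[\hat\phi(x_1)=\hat\phi(x_2)]\cdot \mathrm{Var}\big(\mathbb{Z}^\pi(\tilde x_1)^T\mathbb{Z}^\pi(\tilde x_2)\big)\big]$ for $\tilde x_i$ redrawn from $d$ conditioned on landing in the same cell as $x_i$; restricting the inner variance to the contribution of $\tilde x_2$ versus $x_2$ (and using an auxiliary $x'$ as a "probe" vector $\mathbb{Z}^\pi(x')$) lower-bounds it by $\big|\mathbb{Z}^\pi(x')^T(\mathbb{Z}^\pi(x_1)-\mathbb{Z}^\pi(x_2))\big|$ up to constants, which is exactly the left-hand side of the theorem. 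Pushing the factor $\sqrt{N}$ through (it arises because the squared excess risk controls the first moment only after a Cauchy–Schwarz over the at-most-$N$ occupied cells) yields the stated rate $\sqrt{\tfrac{8N}{n}(3+4N^2\ln n+4\ln|\Phi_N|+4\ln(2/\delta))}$.

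The main obstacle I anticipate is the second step: carefully turning the $\ell_2$ excess-risk bound into the $\ell_1$-type collision bound on the left, including the correct handling of the probe vector $x'$ and extracting the right power of $N$. One has to be precise about which randomness is integrated out — the labels $R_1,R_2$ are fresh Bernoulli-like draws, so the "variance" that the regression cannot capture is intrinsic and must be separated from the cross-cell aggregation error — and about why it suffices to bound the quantity for an arbitrary fixed $x'$ rather than in expectation over $x'$. The uniform-convergence part is routine once the covering-number bound $|\mathcal{W}_N^{\mathrm{net}}|\le n^{N^2}$ is in place; the realizability of $(\phi^\star,w^\star)$ is guaranteed by the choice $N\ge N_{\pi,K}$ and Proposition~\ref{prop:aligned}'s underlying abstraction, so the comparator term $\mathcal{L}(\phi^\star,w^\star)$ equals the irreducible noise floor and cancels in the excess-risk expression.
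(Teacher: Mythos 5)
Your skeleton is recognizably the same as the paper's: you correctly identify the Bayes-optimal predictor $f^*(x_1,x_2)=1-\mathbb{Z}^\pi(x_1)^T\mathbb{Z}^\pi(x_2)$, and your covering count (a $1/n$-net of $\mathcal{W}_N$ of size $n^{N^2}$ times $|\Phi_N|$ encoders) is exactly the paper's Lemma \ref{lemma:pointwise_covering_number}. But there are two concrete gaps. First, your concentration step as written gives only a slow-rate bound, $\mathcal{L}(\hat\phi,\hat w)\le\mathcal{L}(\phi^\star,w^\star)+O\big(\sqrt{(N^2\ln n+\ln|\Phi_N|+\ln(1/\delta))/n}\big)$. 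Since for square loss with a realizable comparator the excess risk equals $\mathbb{E}[(\hat f-f^*)^2]$, this yields $\mathbb{E}[(\hat f-f^*)^2]=O(n^{-1/2})$, and after the per-cell Cauchy--Schwarz the theorem's left-hand side would only be bounded by $O(N^{1/2}n^{-1/4})$ --- not the stated $O(\sqrt{N/n}\cdot\text{polylog})$. The paper instead invokes a fast-rate realizable ERM bound for square loss (Proposition \ref{proposition:concentration}, from Misra et al.), which gives $\mathbb{E}_{D}[(\hat f-f^*)^2]\le\frac{6}{n}+\frac{8}{n}\big(N^2\ln n+\ln|\Phi_N|+\ln(2/\delta)\big)$ directly; plain Hoeffding plus a union bound is not enough, and avoiding the citation would require Bernstein together with the self-bounding variance property of the excess square loss under realizability.

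Second, your translation from excess risk to the collision bound goes through a bias--variance identity for the population-optimal regressor $\hat w_\phi$ and a within-cell variance lower bound, which you yourself flag as the step you cannot complete. The paper's actual argument is much shorter and avoids all of this: because $\hat f(\cdot,\cdot)=\hat w(\hat\phi(\cdot),\hat\phi(\cdot))$ factors through the encoder, on the event $\hat\phi(x_1)=\hat\phi(x_2)=i$ one has $\hat f(x_1,x')=\hat f(x_2,x')$ identically, hence $|\mathbb{Z}^\pi(x')^T(\mathbb{Z}^\pi(x_1)-\mathbb{Z}^\pi(x_2))|=|f^*(x_1,x')-f^*(x_2,x')|\le|f^*(x_1,x')-\hat f(x_1,x')|+|f^*(x_2,x')-\hat f(x_2,x')|$, and a single Cauchy--Schwarz per cell bounds the $i$-th contribution by $2\sqrt{b_i\Delta_{\mathrm{reg}}}$ with $b_i=\Pr_{x\sim d}[\hat\phi(x)=i]$; summing $\sum_i\sqrt{b_i}\le\sqrt N$ produces the $\sqrt N$ factor. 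Your route additionally conflates the empirical $\hat w$ with the conditional-mean regressor induced by $\hat\phi$, which would need its own estimation-error argument. One point in your favour: your worry about the fixed probe $x'$ is legitimate --- the concentration lemma controls the regression error only on average over the second argument, while the final Cauchy--Schwarz needs it for each fixed $x'$ --- but this is a looseness in the paper's own write-up rather than something your variance decomposition repairs.
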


We provide the proof in Appendix \ref{app:proof}. 
\tcr{
Although $|\Phi_N|$ is upper bounded by $N^{|\mathcal{X}|}$, it is generally much smaller for deep encoders that generalize over the state-action space.
}
The theorem shows that whenever $\hat{\phi}$ maps two state-actions $x_1, x_2$ to the same abstraction, $\mathbb{Z}^\pi(x_1) \approx \mathbb{Z}^\pi(x_2)$ up to an error proportional to $1/\sqrt{n}$ (ignoring the logarithm factor). 
The following corollary shows that $\hat{\phi}$ becomes a $Z^\pi$-irrelevance when $n\to\infty$.

\begin{corollary}
\label{corollary:main}
The encoder $\hat{\phi}$ returned by Algorithm \ref{algo:z-learning} with $n\to\infty$ is a $Z^\pi$-irrelevance, 
i.e., 
for any $x_1, x_2\in\mathcal{X}$, $\mathbb{Z}^\pi(x_1) 
= \mathbb{Z}^\pi(x_2)$ if $\hat{\phi}(x_1)=\hat{\phi}(x_2)$.
\end{corollary}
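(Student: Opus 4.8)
The plan is to pass to the limit $n\to\infty$ in Theorem \ref{theorem:main} and then use the finiteness of $\mathcal{X}$ to turn the resulting \emph{approximate} statement into the \emph{exact} equality claimed. Throughout, note that $\hat\phi$ maps into $[N]$ and we only ever need that $\mathcal{X}$ is finite (which in particular makes $\Phi_N$ finite).

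First I would record that the right-hand side of Theorem \ref{theorem:main}, call it $\varepsilon_n(\delta)$, tends to $0$ as $n\to\infty$ for each fixed $\delta$ (the dominant term is $N^2\ln n/n$). So fix $\delta>0$ and work on the $(1-\delta)$-probability event of Theorem \ref{theorem:main}: there, for every $x'\in\mathcal{X}$ we have $\mathbb{E}_{x_1,x_2\sim d}\big[\mathbb{I}[\hat\phi(x_1)=\hat\phi(x_2)]\,|\mathbb{Z}^\pi(x')^T(\mathbb{Z}^\pi(x_1)-\mathbb{Z}^\pi(x_2))|\big]\le\varepsilon_n(\delta)$. Since $\mathrm{supp}(d)=\mathcal{X}$ and $\mathcal{X}$ is finite, $d_{\min}:=\min_x d(x)>0$ and every summand of this expectation is non-negative; hence isolating the summand of any particular pair $x_1,x_2$ with $\hat\phi(x_1)=\hat\phi(x_2)$ gives $|\mathbb{Z}^\pi(x')^T(\mathbb{Z}^\pi(x_1)-\mathbb{Z}^\pi(x_2))|\le\varepsilon_n(\delta)/d_{\min}^2$ for all $x'$.

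The key step is then to instantiate $x'=x_1$ and $x'=x_2$ and subtract the two inequalities: the right-hand sides of the inner products telescope, giving $\|\mathbb{Z}^\pi(x_1)-\mathbb{Z}^\pi(x_2)\|_2^2\le 2\varepsilon_n(\delta)/d_{\min}^2$ for every such pair. Now set $\gamma^*:=\min\{\|\mathbb{Z}^\pi(x_1)-\mathbb{Z}^\pi(x_2)\|_2^2 : x_1,x_2\in\mathcal{X},\ \mathbb{Z}^\pi(x_1)\neq\mathbb{Z}^\pi(x_2)\}$, a minimum over a finite set of strictly positive reals, so $\gamma^*>0$. Once $n$ is large enough that $2\varepsilon_n(\delta)/d_{\min}^2<\gamma^*$, the inequality is only consistent with $\mathbb{Z}^\pi(x_1)=\mathbb{Z}^\pi(x_2)$; that is, $\hat\phi(x_1)=\hat\phi(x_2)\Rightarrow\mathbb{Z}^\pi(x_1)=\mathbb{Z}^\pi(x_2)$, which is exactly the definition of a $Z^\pi$-irrelevance. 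Thus for every $\delta>0$ there is $n_0(\delta)$ with $\Pr[\hat\phi\text{ is a }Z^\pi\text{-irrelevance}]\ge 1-\delta$ for all $n\ge n_0(\delta)$, giving the claimed convergence as $n\to\infty$; taking $\delta_n=n^{-2}$ and applying Borel--Cantelli would even upgrade this to an almost-sure statement.

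I do not anticipate a genuine obstacle. The only points requiring care are the quantifier bookkeeping — $\varepsilon_n$ depends on $\delta$, so one fixes $\delta$ first and then sends $n\to\infty$ — and the small but essential observation that distinct values of $\mathbb{Z}^\pi$ over the finite space $\mathcal{X}$ are uniformly separated by $\gamma^*>0$, which is what lets a vanishing bound force exact equality rather than mere approximation.
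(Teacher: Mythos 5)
Your proof is correct, but it reaches the corollary by a genuinely different route than the paper. The paper does not derive the corollary from Theorem~\ref{theorem:main} at all: it argues directly in the asymptotic regime that the empirical risk minimizer coincides with the Bayes-optimal predictor, $\hat{f}=f^*$ (this implicitly uses realizability, i.e.\ $N\ge N_{\pi,K}$ so that $f^*\in\mathcal{F}_N$, together with Equation~(\ref{eq:bayes_optimal})), and then notes that $\hat{\phi}(x_1)=\hat{\phi}(x_2)$ forces $f^*(x_1,\cdot)=f^*(x_2,\cdot)$, whence evaluating at $x=x_1$ and $x=x_2$ gives $\|\mathbb{Z}^\pi(x_1)-\mathbb{Z}^\pi(x_2)\|^2=0$ exactly. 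You instead treat the statement as a true corollary of the finite-sample theorem: you send the bound to zero, use $\mathrm{supp}(d)=\mathcal{X}$ with $\mathcal{X}$ finite to convert the expectation bound into a pointwise bound via $d_{\min}^2$, apply the same instantiation at $x'=x_1,x_2$ to control $\|\mathbb{Z}^\pi(x_1)-\mathbb{Z}^\pi(x_2)\|_2^2$, and then invoke the uniform separation $\gamma^*>0$ of distinct return distributions over a finite space to upgrade an $o(1)$ bound to exact equality. The shared ingredient is the evaluation at $x'=x_1$ and $x'=x_2$; what differs is everything around it. Your version buys rigor and explicit quantifiers --- it does not separately assert consistency of the ERM, and it makes visible exactly where finiteness of $\mathcal{X}$, positivity of $d$, and the separation of the finitely many values of $\mathbb{Z}^\pi$ are needed, at the price of extra constants ($d_{\min}$, $\gamma^*$) and a convergence-in-probability formulation. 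The paper's version is shorter and yields exact equality immediately, but leans on the unproved (though standard) identification $\hat f = f^*$ in the limit. Both are valid; yours is arguably the more honest reading of ``Corollary~\ref{corollary:main} is the asymptotic case of Theorem~\ref{theorem:main}.''
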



\subsection{Return-based contrastive learning for RL (RCRL)}
\label{sec:RCRL}


We adapt Z-learning as an auxiliary task that helps the agent to learn a representation with meaningful semantics.
The auxiliary task based RL algorithm is called RCRL and shown in Algorithm \ref{algo:auxiliary_task}.
Here, we use Rainbow (for discrete control) and SAC (for continuous control) as the base RL algorithm for RCRL.
However, RCRL can also be easily incorporated with other model-free RL algorithms.
While Z-learning relies on a dataset sampled by rolling out the current policy, 
RCRL constructs such a dataset using the samples collected by the base RL algorithm and therefore does not require additional samples, e.g., directly using the replay buffer in Rainbow or 
SAC
(see Line 7 and 8 in Algorithm \ref{algo:auxiliary_task}).
Compared with Z-learning, we use the state-action embedding network that is shared with the base RL algorithm $\phi:\mathcal{X}\to\mathbb{R}^d$
as the encoder, 
and use an additional discriminator trained by the auxiliary task $w:\mathbb{R}^d\times \mathbb{R}^d \to [0,1]$ as the regressor.

However, when implementing Z-learning as the auxiliary task, the labels in the dataset may be unbalanced. 
Although this does not cause problems in the theoretical analysis since we assume the Bayes optimizer can be obtained for the contrastive loss,
{it may prevent the discriminator from learning properly in practice (cf. Line 8 in Algorithm \ref{algo:z-learning}).}
To solve this problem, instead of drawing samples independently from the replay buffer $\mathcal{B}$~(analogous to sampling from the distribution $d$ in Z-learning),
we sample the pairs for $\mathcal{D}$ as follows:
As a preparation, we cut the trajectories in $\mathcal{B}$ into segments, where each segment contains state-action pairs with the same or similar returns.
\tcr{
Specifically, in Atari games, we create a new segment once the agent receives a non-zero reward.
In DMControl tasks, we first prescribe a threshold and then create a new segment once the cumulative reward within the current segment exceeds this threshold. 
}
For each sample in $\mathcal{D}$, we first draw an anchor state-action pair from $\mathcal{B}$ randomly.
Afterwards, 
we generate a positive sample by 
drawing a state-action pair from the same segment of the anchor state-action pair.
Then, we draw 
another state-action pair randomly
from $\mathcal{B}$ and use it as the negative sample.

\begin{algorithm}[thbp]
\label{algo:auxiliary_task}
\SetAlgoLined
    Initialize the embedding $\phi_\theta:\mathcal{X} \to \mathbb{R}^d$ and a discriminator $w_\vartheta: \mathbb{R}^d \times \mathbb{R}^d \to [0,1]$
    
    Initialize the parameters $\varphi$ for the base RL algorithm 
    that uses the learned embedding $\phi_\theta$
    
    Given a batch of samples $\mathcal{D}$, the loss function for the base RL algorithm is $\mathcal{L}_\text{RL}(\phi_\theta, \varphi; \mathcal{D})$
    
    A replay buffer $\mathcal{B} = \emptyset$
    
    \ForEach{iteration}{
        Rollout the current policy and store the samples to the replay buffer $\mathcal{B}$
        
        \tcr{Draw a batch of samples $\mathcal{D}$ from the replay buffer $\mathcal{B}$}
        
        \tcr{Update the parameters with the loss function 
        $\mathcal{L}(\phi_\theta, w_\vartheta; \mathcal{D}) + \mathcal{L}_\text{RL}(\phi_\theta, \varphi; \mathcal{D})$}
    }
    \Return{The learned policy}
\caption{Return based Contrastive learning for RL (RCRL)}
\end{algorithm}

{
We believe our auxiliary task may boost the learning, due to better return-induced representations that facilitate generalization across different state-action pairs. 
Learning on one state-action pair will affect the value of all state-action pairs that share similar representations with this pair. 
When the embedding network assigns similar representations to similar state-action pairs (e.g., sharing similar distribution over returns), the update for one state-action pair is representative for the updates for other similar state-action pairs,
which improves sample efficiency. 
However, such generalization may not be achieved by the base RL algorithm since, when trained by the algorithm with only a return-based loss, 
similar state-action pairs
may have similar Q values but very different representations.
{One may argue that, we can adopt several temporal difference updates to propagate the values for state-action pairs with same sampled return, and finally all such pairs are assigned with similar Q values.}
However, since we adopt a learning algorithm with
{bootstrapping/temporal difference learning} 
and frozen target network in deep RL, it could take longer time to propagate the value across different state-action pairs, compared with direct generalization over state-action pairs with contrastive learning.
}

{
Meanwhile, since we construct auxiliary tasks based on return, which is a very different structure from image augmentation or temporal structure, our method could be combined with existing methods to achieve further improvement. }

\subsection{Network Structure for State-action Embedding}

In our algorithm, the auxiliary task is based on the state-action embedding, instead of the state embedding that is frequently used in the previous work \cite[e.g., ][]{srinivas2020curl}.
To facilitate our algorithm, we design two new structures for Atari 2600 Games~(discrete action) and DMControl Suite~(continuous action) respectively.
We show the structure in Figure \ref{fig:nn_structure}.
For Atari, we learn an action embedding for each action and use the element-wise product of the state embedding and action embedding as the state-action embedding.
For DMControl, the action embedding is a real-valued vector and we use the concatenation of the action embedding and the state embedding.

\begin{figure}[htbp]
    \centering
    \includegraphics[width=0.9\textwidth]{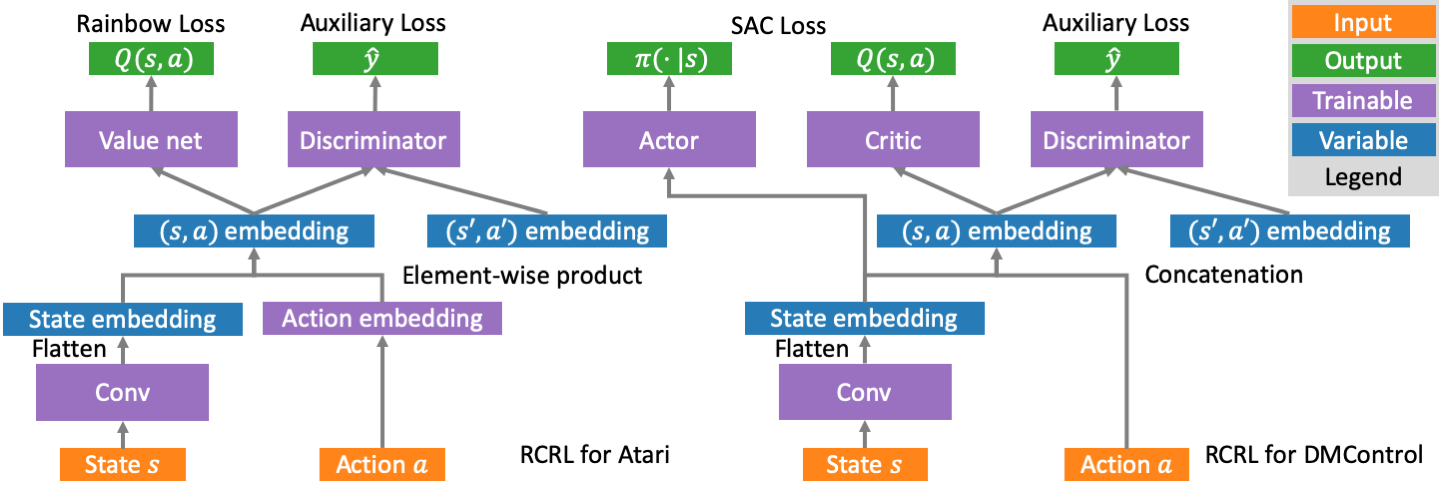}
    \caption{
\tcr{
The network structure of RCRL for Atari (left) and DMControl Suite (right).
}
    }
    \label{fig:nn_structure}
\end{figure}

\section{Experiment}
\label{sec:experiment}

In this section, we conduct the following experiments:
1) We implement RCRL on Atari 2600 Games \citep{bellemare2013arcade} and DMControl Suite \citep{tassa2020dmcontrol}, and 
\tcr{compare with state-of-the-art model-free methods and strong model-based methods.}
In particular, we compare with CURL~\citep{srinivas2020curl}, a top performing auxiliary task based RL algorithm for pixel-based control tasks that also uses a contrastive loss.
In addition, we also combine RCRL with CURL to study whether our auxiliary task further boosts the learning when combined with other auxiliary tasks.
2) To further study the reason why our algorithm works, we analyze the generalization of the learned representation. 
{
Specifically, we compare the cosine similarity between the representations of different state-action pairs.
}
We provide the implementation details in Appendix \ref{app:implementation}.

\subsection{Evaluation on Atari and DMControl}

\textbf{Experiments on Atari.}
Our experiments on Atari are conducted in low data regime of 100k interactions between the agent and the environment, which corresponds to two hours of human play.
We show the performance of different algorithms/baselines, including
the scores for 
average human (Human),
SimPLe \citep{kaiser2019model} which is a strong model-based baseline for this regime,
original Rainbow \citep{hessel2017rainbow},
Data-Efficient Rainbow \citep[ERainbow, ][]{van2019use}, 
ERainbow with state-action embeddings (ERainbow-sa, cf. Figure \ref{fig:nn_structure} Left),
CURL \citep{srinivas2020curl} that is based on ERainbow,
RCRL which is based on ERainbow-sa,
and the algorithm that combines the auxiliary loss for CURL and RCRL to ERainbow-sa (RCRL+CURL).
We show the evaluation results of our algorithm and the baselines on Atari games in Table \ref{tab:Atari}.
First, we observe that using state-action embedding instead of state embedding in ERainbow does not lead to significant performance change by comparing ERainbow with ERainbow-sa.
Second, 
built upon ERainbow-sa, the auxiliary task in RCRL leads to better performance compared with not only the base RL algorithm but also SimPLe and CURL in terms of the median human normalized score (HNS).
Third, we can see that RCRL further boosts the learning when combined with CURL and achieves the best performance for 7 out of 26 games, which shows that our auxiliary task can be successfully combined with other auxiliary tasks that embed different sources of information to learn the representation.

\begin{table*}[!htb]
\centering
{\fontsize{6.5}{7.5}\selectfont
\begin{tabular}{lrrrrrrrr}
                & Human   & \multicolumn{1}{l}{SimPLe} & \multicolumn{1}{l}{Rainbow} & \multicolumn{1}{l}{ERainbow} & \multicolumn{1}{l}{ERainbow-sa} & \multicolumn{1}{l}{CURL} & \multicolumn{1}{l}{RCRL} & RCRL+CURL \\ \hline
ALIEN           & 7127.7  & 616.9                      & 318.7                       & 739.9                        & 813.8                             & 558.2                    & 854.2           &    \textbf{912.2}       \\
AMIDAR          & 1719.5  & 88.0                       & 32.5                        & \textbf{188.6}               & 154.2                             & 142.1                    & 157.7                    &  125.1          \\
ASSAULT         & 742.0   & 527.2                      & 231.0                       & 431.2                        & 576.2                             & \textbf{600.6}           & 569.6                    &  588.4         \\
ASTERIX         & 8503.3  & \textbf{1128.3}            & 243.6                       & 470.8                        & 697.0                             & 734.5                    & 799.0                    &  683.0          \\
BANK HEIST      & 753.1   & 34.2                       & 15.6                        & 51.0                         & 96.0                              & \textbf{131.6}           & 107.2                    &  99.0        \\
BATTLE ZONE     & 37187.5 & 5184.4                     & 2360.0                      & 10124.6                      & 13920.0                           & 14870.0         & 14280.0              &  \textbf{17380.0}  \\
BOXING          & 12.1    & \textbf{9.1}               & -24.8                       & 0.2                          & 2.2                               & 1.2                      & 2.7                      & 6.7          \\
BREAKOUT        & 30.5    & \textbf{16.4}              & 1.2                         & 1.9                          & 3.4                               & 4.9                      & 4.3                      &   4.0        \\
CHOPPER COMMAND & 7387.8  & 1246.9            & 120.0                       & 861.8                        & 1064.0                            & 1058.5                   &  \textbf{1262.0}              &      1008.0  \\
CRAZY CLIMBER   & 35829.4 & \textbf{62583.6}           & 2254.5                      & 16185.3                      & 21840.0                           & 12146.5                  & 15120.0                  & 15032.0           \\
DEMON ATTACK    & 1971.0  & 208.1                      & 163.6                       & 508.0                        & 768.0                             & \textbf{817.6}           & 790.4                    &     618.3      \\
FREEWAY         & 29.6    & 20.3                       & 0.0                         & \textbf{27.9}                & 26.5                              & 26.7                     & 26.6                     &   25.4         \\
FROSTBITE       & 4334.7  & 254.7                      & 60.2                        & 866.8                        & 1472.0                   & 1181.3                   & 1337.6                   & \textbf{1516.6}          \\
GOPHER          & 2412.5  & \textbf{771.0}             & 431.2                       & 349.5                        & 384.8                             & 669.3                    & 429.6                    &     458.8      \\
HERO            & 30826.4 & 2656.6                     & 487.0                       & 6857.0              & 4787.9                            & 6279.3                   &      6454.1     &     \textbf{7647.4}     \\
JAMESBOND       & 302.8   & 125.3                      & 47.4                        & 301.6                        & 308.0                             & 471.0           & 314.0                    &     \textbf{503.0}      \\
KANGAROO        & 3035.0  & 323.1                      & 0.0                         & 779.3                        & 732.0                             & 872.5           & 842.0                    &     \textbf{932.0}      \\
KRULL           & 2665.5  & \textbf{4539.9}            & 1468.0                      & 2851.5                       & 2740.0                            & 4229.6                   & 2997.5                   &   3905.8        \\
KUNG FU MASTER  & 22736.3 & \textbf{17257.2}           & 0.0                         & 14346.1                      & 11914.0                           & 14307.8                  & 9762.0                   &  11856.0      \\
MS PACMAN       & 6951.6  & 1480.0                     & 67.0                        & 1204.1                       & 1384.5                            & 1465.5                   & \textbf{1555.2}          &  1336.8         \\
PONG            & 14.6    & \textbf{12.8}              & -20.6                       & -19.3                        & -18.3                             & -16.5                    & -16.9                    &  -18.72          \\
PRIVATE EYE     & 69571.3 & 58.3                       & 0.0                         & 97.8                         & 80.0                              & 218.4           & 102.6                    &  \textbf{282.3}         \\
QBERT           & 13455.0 & \textbf{1288.8}            & 123.5                       & 1152.9                       & 893.5                             & 1042.4                   & 1121.0                   & 942.0          \\
ROAD RUNNER     & 7845.0  & 5640.6                     & 1588.5                      & \textbf{9600.0}              & 5392.0                            & 5661.0                   & 6138.0                   & 5392.0          \\
SEAQUEST        & 42054.7 & \textbf{683.3}             & 131.7                       & 354.1                        & 402.0                             & 384.5                    & 375.6                    & 489.6          \\
UP N DOWN       & 11693.2 & 3350.3                     & 504.6                       & 2877.4                       & 3235.2                            & 2955.2                   & \textbf{4210.2}          & 3127.8          \\ \hline
Median HNS      & 100.0\% & 14.4\%                     & 0.0\%                       & 16.1\%                       & 16.7\%                         & 17.5\%                   & 18.5\%                   & \textbf{19.6}\%            
\end{tabular}
}
\caption{
Scores of different algorithms/baselines on 26 games for Atari-100k benchmark.
We show the mean score averaged over five random seeds.
}
\vspace{-0.3cm}
\label{tab:Atari}
\end{table*}

\begin{figure}[htbp]
    \centering
    \includegraphics[width=0.9\textwidth]{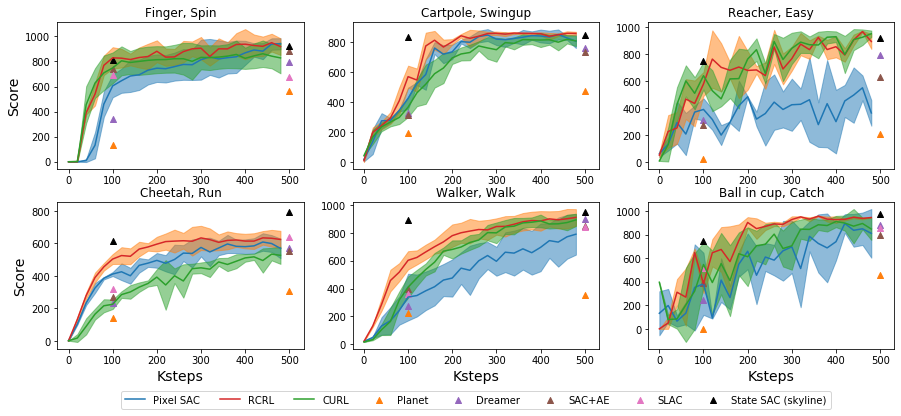}
    \caption{
Scores achieved by RCRL and other baseline algorithms during the training for different tasks in DMControl suite.
The line and the shaded area indicate the average and the standard deviation over 5 random seeds respectively.
    }
    \label{fig:dmc}
\end{figure}

\textbf{Experiments on DMControl.}
For DMControl, we compare our algorithm with the following baselines:
Pixel SAC \citep{haarnoja2018soft} which is the base RL algorithm that receives images as the input; 
SLAC \citep{lee2019stochastic} that learns a latent variable model and then updates the actor and critic based on it;
SAC+AE \citep{yarats2019improving} that uses a regularized autoencoder for reconstruction in the auxiliary task; PlaNet \citep{hafner2019learning} and Dreamer \citep{hafner2019dream} that learn a latent space world model and explicitly plan with the learned model.
We also compare with a skyline, State SAC, the receives the low-dimensional state representation instead of the image as the input.
Different from Atari games, tasks in DMControl yield dense reward.
Consequently, we split the trajectories into segments using a threshold such that the difference of returns within each segment does not exceed this threshold.
Similarly, we test the algorithms in low data regime of 500k interactions.
We show the evaluation results in Figure \ref{fig:dmc}.
We can see that our auxiliary task not only brings performance improvement over the base RL algorithm but also outperforms CURL and other state-of-the-art baseline algorithms in different tasks.
Moreover, we observe that our algorithm is more robust across runs with different seeds compared with Pixel SAC and CURL (e.g., for the task \emph{Ball in cup, Catch}).

\subsection{Analysis on the learned representation}


We analyze on the learned representation of our model to demonstrate that our auxiliary task attains a representation with better generalization, which may explain why our algorithm succeeds.
We use cosine similarity to measure the generalization from one state-action pair to another in the deep learning model.
Given two state-action pairs $x_1, x_2 \in \mathcal{X}$, cosine similarity is defined as 
$\frac{\phi_\theta(x_1)^T \phi_\theta(x_2)}{||\phi_\theta(x_1)||\, ||\phi_\theta(x_2)||}$, where $\phi_\theta(\cdot)$ is the learnt embedding network.


We show the cosine similarity of the representations between positive pairs (that are sampled within the same segment and therefore likely to share similar return distributions) and negative pairs (i.e., randomly sampled state-action pairs) during the training on the game \emph{Alien} in Figure \ref{fig:similarity}.
First, we observe that when a good policy is learned, the representations of positive pairs are similar while those of negative pairs are dissimilar.
This indicates that a good representation (or the representation that supports a good policy) aggregates the state-action pairs with similar return distributions.
Then, we find that our auxiliary loss can accelerate such generalization for the representation, which makes RCRL learn faster. 


\begin{figure}[htbp]
    \centering
    \includegraphics[width=0.65\textwidth]{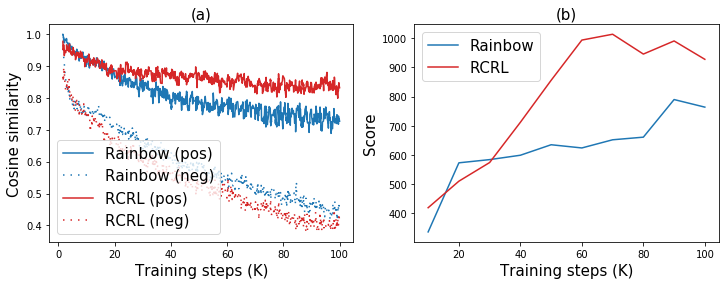}
    \caption{
Analysis of the learned representation on \emph{Alien}.
(a) The cosine similarity between the representations of the positive/negative state-action pair and the anchor during the training of Rainbow and RCRL.
(b) The scores of the two algorithms during the training.
    }
    \label{fig:similarity}
\end{figure}






\vspace{-10pt}
\section{Conclusion}

In this paper, we propose return-based contrastive representation learning for RL~(RCRL), which introduces a return-based auxiliary task to facilitate policy training with standard RL algorithms. 
{
Our auxiliary task is theoretically justified to learn representations that capture the structure of $Z^\pi$-irrelevance, which can reduce the size of the state-action space as well as 
approximate the Q values arbitrarily accurately.
}
Experiments on Atari games and the DMControl suite \tcr{in low data regime} demonstrate that our algorithm achieves superior performance not only when using our auxiliary task alone but also when combined with other auxiliary tasks, . 

{As for future work, we are interested in how to combine different auxiliary tasks in a more sophisticated way, perhaps with a meta-controller. Another potential direction would be providing a theoretical analysis for auxiliary tasks and justifying why existing auxiliary tasks can speed up deep RL algorithms. }




\section{Acknowledgment}
Guoqing Liu and Nenghai Yu are supported in part by the Natural Science Foundation of China under Grant U20B2047, Exploration Fund Project of University of Science and Technology of China under Grant YD3480002001.
Chuheng Zhang and Jian Li are supported in part by the National Natural Science Foundation of China Grant 61822203, 61772297, 61632016 and the Zhongguancun Haihua Institute for Frontier Information Technology, Turing AI Institute of Nanjing and Xi'an Institute for Interdisciplinary Information Core Technology.

\bibliography{main}

\begin{thebibliography}{50}
\providecommand{\natexlab}[1]{#1}
\providecommand{\url}[1]{\texttt{#1}}
\expandafter\ifx\csname urlstyle\endcsname\relax
  \providecommand{\doi}[1]{doi: #1}\else
  \providecommand{\doi}{doi: \begingroup \urlstyle{rm}\Url}\fi

\bibitem[Mnih et~al.(2015)Mnih, Kavukcuoglu, Silver, Rusu, Veness, Bellemare,
  Graves, Riedmiller, Fidjeland, Ostrovski, et~al.]{mnih2015human}
Volodymyr Mnih, Koray Kavukcuoglu, David Silver, Andrei~A Rusu, Joel Veness,
  Marc~G Bellemare, Alex Graves, Martin Riedmiller, Andreas~K Fidjeland, Georg
  Ostrovski, et~al.
\newblock Human-level control through deep reinforcement learning.
\newblock \emph{Nature}, 518\penalty0 (7540):\penalty0 529, 2015.

\bibitem[Srinivas et~al.(2020)Srinivas, Laskin, and Abbeel]{srinivas2020curl}
Aravind Srinivas, Michael Laskin, and Pieter Abbeel.
\newblock Curl: Contrastive unsupervised representations for reinforcement
  learning.
\newblock \emph{arXiv preprint arXiv:2004.04136}, 2020.

\bibitem[Suddarth and Kergosien(1990)]{suddarth1990rule}
Steven~C Suddarth and YL~Kergosien.
\newblock Rule-injection hints as a means of improving network performance and
  learning time.
\newblock In \emph{European Association for Signal Processing Workshop}, pages
  120--129. Springer, 1990.

\bibitem[Sutton et~al.(2011)Sutton, Modayil, Delp, Degris, Pilarski, White, and
  Precup]{sutton2011horde}
Richard~S Sutton, Joseph Modayil, Michael Delp, Thomas Degris, Patrick~M
  Pilarski, Adam White, and Doina Precup.
\newblock Horde: A scalable real-time architecture for learning knowledge from
  unsupervised sensorimotor interaction.
\newblock In \emph{The 10th International Conference on Autonomous Agents and
  Multiagent Systems-Volume 2}, pages 761--768, 2011.

\bibitem[Gelada et~al.(2019)Gelada, Kumar, Buckman, Nachum, and
  Bellemare]{gelada2019deepmdp}
Carles Gelada, Saurabh Kumar, Jacob Buckman, Ofir Nachum, and Marc~G Bellemare.
\newblock Deepmdp: Learning continuous latent space models for representation
  learning.
\newblock \emph{arXiv preprint arXiv:1906.02736}, 2019.

\bibitem[Bellemare et~al.(2019)Bellemare, Dabney, Dadashi, Taiga, Castro,
  Le~Roux, Schuurmans, Lattimore, and Lyle]{bellemare2019geometric}
Marc Bellemare, Will Dabney, Robert Dadashi, Adrien~Ali Taiga, Pablo~Samuel
  Castro, Nicolas Le~Roux, Dale Schuurmans, Tor Lattimore, and Clare Lyle.
\newblock A geometric perspective on optimal representations for reinforcement
  learning.
\newblock In \emph{Advances in Neural Information Processing Systems}, pages
  4358--4369, 2019.

\bibitem[Fran{\c{c}}ois-Lavet et~al.(2019)Fran{\c{c}}ois-Lavet, Bengio, Precup,
  and Pineau]{franccois2019combined}
Vincent Fran{\c{c}}ois-Lavet, Yoshua Bengio, Doina Precup, and Joelle Pineau.
\newblock Combined reinforcement learning via abstract representations.
\newblock In \emph{Proceedings of the AAAI Conference on Artificial
  Intelligence}, volume~33, pages 3582--3589, 2019.

\bibitem[Shen et~al.(2020)Shen, He, Zhang, Ni, Dou, and
  Wang]{shen2020auxiliary}
Wei Shen, Xiaonan He, Chuheng Zhang, Qiang Ni, Wanchu Dou, and Yan Wang.
\newblock Auxiliary-task based deep reinforcement learning for participant
  selection problem in mobile crowdsourcing.
\newblock \emph{arXiv preprint arXiv:2008.11087}, 2020.

\bibitem[Zhang et~al.(2020)Zhang, McAllister, Calandra, Gal, and
  Levine]{zhang2020learning}
Amy Zhang, Rowan McAllister, Roberto Calandra, Yarin Gal, and Sergey Levine.
\newblock Learning invariant representations for reinforcement learning without
  reconstruction.
\newblock \emph{arXiv preprint arXiv:2006.10742}, 2020.

\bibitem[Dabney et~al.(2020)Dabney, Barreto, Rowland, Dadashi, Quan, Bellemare,
  and Silver]{dabney2020value}
Will Dabney, Andr{\'e} Barreto, Mark Rowland, Robert Dadashi, John Quan, Marc~G
  Bellemare, and David Silver.
\newblock The value-improvement path: Towards better representations for
  reinforcement learning.
\newblock \emph{arXiv preprint arXiv:2006.02243}, 2020.

\bibitem[Jaderberg et~al.(2016)Jaderberg, Mnih, Czarnecki, Schaul, Leibo,
  Silver, and Kavukcuoglu]{jaderberg2016reinforcement}
Max Jaderberg, Volodymyr Mnih, Wojciech~Marian Czarnecki, Tom Schaul, Joel~Z
  Leibo, David Silver, and Koray Kavukcuoglu.
\newblock Reinforcement learning with unsupervised auxiliary tasks.
\newblock \emph{arXiv preprint arXiv:1611.05397}, 2016.

\bibitem[Hafner et~al.(2019{\natexlab{a}})Hafner, Lillicrap, Ba, and
  Norouzi]{hafner2019dream}
Danijar Hafner, Timothy Lillicrap, Jimmy Ba, and Mohammad Norouzi.
\newblock Dream to control: Learning behaviors by latent imagination.
\newblock \emph{arXiv preprint arXiv:1912.01603}, 2019{\natexlab{a}}.

\bibitem[Hafner et~al.(2019{\natexlab{b}})Hafner, Lillicrap, Fischer, Villegas,
  Ha, Lee, and Davidson]{hafner2019learning}
Danijar Hafner, Timothy Lillicrap, Ian Fischer, Ruben Villegas, David Ha,
  Honglak Lee, and James Davidson.
\newblock Learning latent dynamics for planning from pixels.
\newblock In \emph{International Conference on Machine Learning}, pages
  2555--2565, 2019{\natexlab{b}}.

\bibitem[Oord et~al.(2018)Oord, Li, and Vinyals]{oord2018representation}
Aaron van~den Oord, Yazhe Li, and Oriol Vinyals.
\newblock Representation learning with contrastive predictive coding.
\newblock \emph{arXiv preprint arXiv:1807.03748}, 2018.

\bibitem[Bellemare et~al.(2013)Bellemare, Naddaf, Veness, and
  Bowling]{bellemare2013arcade}
Marc~G Bellemare, Yavar Naddaf, Joel Veness, and Michael Bowling.
\newblock The arcade learning environment: An evaluation platform for general
  agents.
\newblock \emph{Journal of Artificial Intelligence Research}, 47:\penalty0
  253--279, 2013.

\bibitem[Tassa et~al.(2018)Tassa, Doron, Muldal, Erez, Li, Casas, Budden,
  Abdolmaleki, Merel, Lefrancq, et~al.]{tassa2018deepmind}
Yuval Tassa, Yotam Doron, Alistair Muldal, Tom Erez, Yazhe Li, Diego de~Las
  Casas, David Budden, Abbas Abdolmaleki, Josh Merel, Andrew Lefrancq, et~al.
\newblock Deepmind control suite.
\newblock \emph{arXiv preprint arXiv:1801.00690}, 2018.

\bibitem[Hessel et~al.(2017)Hessel, Modayil, Van~Hasselt, Schaul, Ostrovski,
  Dabney, Horgan, Piot, Azar, and Silver]{hessel2017rainbow}
Matteo Hessel, Joseph Modayil, Hado Van~Hasselt, Tom Schaul, Georg Ostrovski,
  Will Dabney, Dan Horgan, Bilal Piot, Mohammad Azar, and David Silver.
\newblock Rainbow: Combining improvements in deep reinforcement learning.
\newblock \emph{arXiv preprint arXiv:1710.02298}, 2017.

\bibitem[Haarnoja et~al.(2018)Haarnoja, Zhou, Abbeel, and
  Levine]{haarnoja2018soft}
Tuomas Haarnoja, Aurick Zhou, Pieter Abbeel, and Sergey Levine.
\newblock Soft actor-critic: Off-policy maximum entropy deep reinforcement
  learning with a stochastic actor.
\newblock In \emph{Proceedings of the 35th International Conference on Machine
  Learning}, volume~80, pages 1861--1870. JMLR, 2018.

\bibitem[Shelhamer et~al.(2016)Shelhamer, Mahmoudieh, Argus, and
  Darrell]{shelhamer2016loss}
Evan Shelhamer, Parsa Mahmoudieh, Max Argus, and Trevor Darrell.
\newblock Loss is its own reward: Self-supervision for reinforcement learning.
\newblock \emph{arXiv preprint arXiv:1612.07307}, 2016.

\bibitem[Guo et~al.(2020)Guo, Pires, Piot, Grill, Altch{\'e}, Munos, and
  Azar]{guo2020bootstrap}
Daniel Guo, Bernardo~Avila Pires, Bilal Piot, Jean-bastien Grill, Florent
  Altch{\'e}, R{\'e}mi Munos, and Mohammad~Gheshlaghi Azar.
\newblock Bootstrap latent-predictive representations for multitask
  reinforcement learning.
\newblock \emph{arXiv preprint arXiv:2004.14646}, 2020.

\bibitem[Lee et~al.(2020)Lee, Fischer, Liu, Guo, Lee, Canny, and
  Guadarrama]{lee2020predictive}
Kuang-Huei Lee, Ian Fischer, Anthony Liu, Yijie Guo, Honglak Lee, John Canny,
  and Sergio Guadarrama.
\newblock Predictive information accelerates learning in rl.
\newblock \emph{arXiv preprint arXiv:2007.12401}, 2020.

\bibitem[Mazoure et~al.(2020)Mazoure, Combes, Doan, Bachman, and
  Hjelm]{mazoure2020deep}
Bogdan Mazoure, Remi Tachet~des Combes, Thang Doan, Philip Bachman, and R~Devon
  Hjelm.
\newblock Deep reinforcement and infomax learning.
\newblock \emph{arXiv preprint arXiv:2006.07217}, 2020.

\bibitem[Veeriah et~al.(2019)Veeriah, Hessel, Xu, Rajendran, Lewis, Oh, van
  Hasselt, Silver, and Singh]{veeriah2019discovery}
Vivek Veeriah, Matteo Hessel, Zhongwen Xu, Janarthanan Rajendran, Richard~L
  Lewis, Junhyuk Oh, Hado~P van Hasselt, David Silver, and Satinder Singh.
\newblock Discovery of useful questions as auxiliary tasks.
\newblock In \emph{Advances in Neural Information Processing Systems}, pages
  9310--9321, 2019.

\bibitem[Schaul et~al.(2015)Schaul, Horgan, Gregor, and
  Silver]{schaul2015universal}
Tom Schaul, Daniel Horgan, Karol Gregor, and David Silver.
\newblock Universal value function approximators.
\newblock In \emph{International conference on machine learning}, pages
  1312--1320, 2015.

\bibitem[Borsa et~al.(2018)Borsa, Barreto, Quan, Mankowitz, Munos, van Hasselt,
  Silver, and Schaul]{borsa2018universal}
Diana Borsa, Andr{\'e} Barreto, John Quan, Daniel Mankowitz, R{\'e}mi Munos,
  Hado van Hasselt, David Silver, and Tom Schaul.
\newblock Universal successor features approximators.
\newblock \emph{arXiv preprint arXiv:1812.07626}, 2018.

\bibitem[Veeriah et~al.(2018)Veeriah, Oh, and Singh]{veeriah2018many}
Vivek Veeriah, Junhyuk Oh, and Satinder Singh.
\newblock Many-goals reinforcement learning.
\newblock \emph{arXiv preprint arXiv:1806.09605}, 2018.

\bibitem[de~Bruin et~al.(2018)de~Bruin, Kober, Tuyls, and
  Babu{\v{s}}ka]{de2018integrating}
Tim de~Bruin, Jens Kober, Karl Tuyls, and Robert Babu{\v{s}}ka.
\newblock Integrating state representation learning into deep reinforcement
  learning.
\newblock \emph{IEEE Robotics and Automation Letters}, 3\penalty0 (3):\penalty0
  1394--1401, 2018.

\bibitem[Mirowski et~al.(2016)Mirowski, Pascanu, Viola, Soyer, Ballard, Banino,
  Denil, Goroshin, Sifre, Kavukcuoglu, et~al.]{mirowski2016learning}
Piotr Mirowski, Razvan Pascanu, Fabio Viola, Hubert Soyer, Andrew~J Ballard,
  Andrea Banino, Misha Denil, Ross Goroshin, Laurent Sifre, Koray Kavukcuoglu,
  et~al.
\newblock Learning to navigate in complex environments.
\newblock \emph{arXiv preprint arXiv:1611.03673}, 2016.

\bibitem[van~der Pol et~al.(2020)van~der Pol, Worrall, van Hoof, Oliehoek, and
  Welling]{van2020mdp}
Elise van~der Pol, Daniel~E Worrall, Herke van Hoof, Frans~A Oliehoek, and Max
  Welling.
\newblock Mdp homomorphic networks: Group symmetries in reinforcement learning.
\newblock \emph{arXiv preprint arXiv:2006.16908}, 2020.

\bibitem[Hessel et~al.(2019)Hessel, Soyer, Espeholt, Czarnecki, Schmitt, and
  van Hasselt]{hessel2019multi}
Matteo Hessel, Hubert Soyer, Lasse Espeholt, Wojciech Czarnecki, Simon Schmitt,
  and Hado van Hasselt.
\newblock Multi-task deep reinforcement learning with popart.
\newblock In \emph{Proceedings of the AAAI Conference on Artificial
  Intelligence}, volume~33, pages 3796--3803, 2019.

\bibitem[Sermanet et~al.(2018)Sermanet, Lynch, Chebotar, Hsu, Jang, Schaal,
  Levine, and Brain]{sermanet2018time}
Pierre Sermanet, Corey Lynch, Yevgen Chebotar, Jasmine Hsu, Eric Jang, Stefan
  Schaal, Sergey Levine, and Google Brain.
\newblock Time-contrastive networks: Self-supervised learning from video.
\newblock In \emph{2018 IEEE International Conference on Robotics and
  Automation (ICRA)}, pages 1134--1141. IEEE, 2018.

\bibitem[Dwibedi et~al.(2018)Dwibedi, Tompson, Lynch, and
  Sermanet]{dwibedi2018learning}
Debidatta Dwibedi, Jonathan Tompson, Corey Lynch, and Pierre Sermanet.
\newblock Learning actionable representations from visual observations.
\newblock In \emph{2018 IEEE/RSJ International Conference on Intelligent Robots
  and Systems (IROS)}, pages 1577--1584. IEEE, 2018.

\bibitem[Aytar et~al.(2018)Aytar, Pfaff, Budden, Paine, Wang, and
  de~Freitas]{aytar2018playing}
Yusuf Aytar, Tobias Pfaff, David Budden, Thomas Paine, Ziyu Wang, and Nando
  de~Freitas.
\newblock Playing hard exploration games by watching youtube.
\newblock In \emph{Advances in Neural Information Processing Systems}, pages
  2930--2941, 2018.

\bibitem[Anand et~al.(2019)Anand, Racah, Ozair, Bengio, C{\^o}t{\'e}, and
  Hjelm]{anand2019unsupervised}
Ankesh Anand, Evan Racah, Sherjil Ozair, Yoshua Bengio, Marc-Alexandre
  C{\^o}t{\'e}, and R~Devon Hjelm.
\newblock Unsupervised state representation learning in atari.
\newblock In \emph{Advances in Neural Information Processing Systems}, pages
  8769--8782, 2019.

\bibitem[Dean and Givan(1997)]{dean1997model}
Thomas Dean and Robert Givan.
\newblock Model minimization in markov decision processes.
\newblock In \emph{AAAI/IAAI}, pages 106--111, 1997.

\bibitem[Givan et~al.(2003)Givan, Dean, and Greig]{givan2003equivalence}
Robert Givan, Thomas Dean, and Matthew Greig.
\newblock Equivalence notions and model minimization in markov decision
  processes.
\newblock \emph{Artificial Intelligence}, 147\penalty0 (1-2):\penalty0
  163--223, 2003.

\bibitem[Li et~al.(2006)Li, Walsh, and Littman]{li2006towards}
Lihong Li, Thomas~J Walsh, and Michael~L Littman.
\newblock Towards a unified theory of state abstraction for mdps.
\newblock In \emph{ISAIM}, 2006.

\bibitem[Ravindran(2003)]{ravindran2003smdp}
Balaraman Ravindran.
\newblock Smdp homomorphisms: An algebraic approach to abstraction in semi
  markov decision processes.
\newblock 2003.

\bibitem[Ravindran and Barto(2004{\natexlab{a}})]{ravindran2004algebraic}
Balaraman Ravindran and Andrew~G Barto.
\newblock \emph{An algebraic approach to abstraction in reinforcement
  learning}.
\newblock PhD thesis, University of Massachusetts at Amherst,
  2004{\natexlab{a}}.

\bibitem[Ravindran and Barto(2004{\natexlab{b}})]{ravindran2004approximate}
Balaraman Ravindran and Andrew~G Barto.
\newblock Approximate homomorphisms: A framework for non-exact minimization in
  markov decision processes.
\newblock 2004{\natexlab{b}}.

\bibitem[Taylor et~al.(2009)Taylor, Precup, and Panagaden]{taylor2009bounding}
Jonathan Taylor, Doina Precup, and Prakash Panagaden.
\newblock Bounding performance loss in approximate mdp homomorphisms.
\newblock In \emph{Advances in Neural Information Processing Systems}, pages
  1649--1656, 2009.

\bibitem[Biza and Platt(2018)]{biza2018online}
Ondrej Biza and Robert Platt.
\newblock Online abstraction with mdp homomorphisms for deep learning.
\newblock \emph{arXiv preprint arXiv:1811.12929}, 2018.

\bibitem[Tassa et~al.(2020)Tassa, Tunyasuvunakool, Muldal, Doron, Liu, Bohez,
  Merel, Erez, Lillicrap, and Heess]{tassa2020dmcontrol}
Yuval Tassa, Saran Tunyasuvunakool, Alistair Muldal, Yotam Doron, Siqi Liu,
  Steven Bohez, Josh Merel, Tom Erez, Timothy Lillicrap, and Nicolas Heess.
\newblock dmcontrol: Software and tasks for continuous control, 2020.

\bibitem[Kaiser et~al.(2019)Kaiser, Babaeizadeh, Milos, Osinski, Campbell,
  Czechowski, Erhan, Finn, Kozakowski, Levine, et~al.]{kaiser2019model}
Lukasz Kaiser, Mohammad Babaeizadeh, Piotr Milos, Blazej Osinski, Roy~H
  Campbell, Konrad Czechowski, Dumitru Erhan, Chelsea Finn, Piotr Kozakowski,
  Sergey Levine, et~al.
\newblock Model-based reinforcement learning for atari.
\newblock \emph{arXiv preprint arXiv:1903.00374}, 2019.

\bibitem[van Hasselt et~al.(2019)van Hasselt, Hessel, and
  Aslanides]{van2019use}
Hado~P van Hasselt, Matteo Hessel, and John Aslanides.
\newblock When to use parametric models in reinforcement learning?
\newblock In \emph{Advances in Neural Information Processing Systems}, pages
  14322--14333, 2019.

\bibitem[Lee et~al.(2019)Lee, Nagabandi, Abbeel, and Levine]{lee2019stochastic}
Alex~X Lee, Anusha Nagabandi, Pieter Abbeel, and Sergey Levine.
\newblock Stochastic latent actor-critic: Deep reinforcement learning with a
  latent variable model.
\newblock \emph{arXiv preprint arXiv:1907.00953}, 2019.

\bibitem[Yarats et~al.(2019)Yarats, Zhang, Kostrikov, Amos, Pineau, and
  Fergus]{yarats2019improving}
Denis Yarats, Amy Zhang, Ilya Kostrikov, Brandon Amos, Joelle Pineau, and Rob
  Fergus.
\newblock Improving sample efficiency in model-free reinforcement learning from
  images.
\newblock \emph{arXiv preprint arXiv:1910.01741}, 2019.

\bibitem[Castro(2020)]{castro2020scalable}
Pablo~Samuel Castro.
\newblock Scalable methods for computing state similarity in deterministic
  markov decision processes.
\newblock In \emph{Proceedings of the AAAI Conference on Artificial
  Intelligence}, volume~34, pages 10069--10076, 2020.

\bibitem[Misra et~al.(2019)Misra, Henaff, Krishnamurthy, and
  Langford]{misra2019kinematic}
Dipendra Misra, Mikael Henaff, Akshay Krishnamurthy, and John Langford.
\newblock Kinematic state abstraction and provably efficient rich-observation
  reinforcement learning.
\newblock \emph{arXiv preprint arXiv:1911.05815}, 2019.

\bibitem[Bellemare et~al.(2017)Bellemare, Dabney, and
  Munos]{bellemare2017distributional}
Marc~G Bellemare, Will Dabney, and R{\'e}mi Munos.
\newblock A distributional perspective on reinforcement learning.
\newblock \emph{arXiv preprint arXiv:1707.06887}, 2017.

\end{thebibliography}
\bibliographystyle{unsrtnat}

\clearpage
\appendix
\section{$Z^\pi$-irrelevance}
\label{app:irrelevance}

\subsection{Comparison with bisimulation.}

We consider a bisimulation $\phi$ and denote the set of abstract states as $\mathcal{Z}:=\phi(\mathcal{S})$. 
The bisimulation $\phi_B$ is defined as follows:

\begin{definition}[Bisimulation \citep{givan2003equivalence}]
$\phi_B$ is bisimulation if
$\forall s_1, s_2 \in \mathcal{S}$ where $\phi_B(s_1)=\phi_B(s_2)$, $\forall a \in \mathcal{A}, z'\in \mathcal{Z}$,
\begin{equation*}
    R(s_1, a) = R(s_2, a), \quad
    \sum_{s'\in\phi_B^{-1}(z')} P(s'|s_1, a) = \sum_{s'\in\phi_B^{-1}(z')}
    P(s'|s_2, a).
\end{equation*}
\end{definition}

Notice that the coarseness of $Z^\pi$-irrelevance is dependent on $K$, the number of bins for the return.
When $K\to\infty$, two state-action pairs $x$ and $x'$ are aggregated only when $Z^\pi(x) \stackrel{D}{=} Z^\pi(x')$, which is a strict condition resulting in a fine-grained abstraction.
Here, we provide the following proposition to illustrate that $Z^\pi$-irrelevance abstraction is coarser than bisimulation even when $K\to\infty$.

\begin{proposition}
\label{prop:formal_coarser}
Given ${\phi}_B$ to be the coarsest bisimulation, ${\phi}_B$ induces a $Z^\pi$-irrelevance abstraction for any policy $\pi$ defined over $\mathcal{Z}$. Specifically, 
if 
$\forall s_1, s_2 \in \mathcal{S}$ where
${\phi}_B(s_1)=\phi_B(s_2)$, then $Z^\pi(s_1,a) \stackrel{D}{=} Z^\pi(s_2, a), \forall a\in\mathcal{A}$ for any policy $\pi$ defined over $\mathcal{Z}$.
\end{proposition}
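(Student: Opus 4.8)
The plan is to show that the bisimulation partition respects the return distribution, by proving that the distribution of $Z^\pi(s,a)$ depends on $s$ only through its bisimulation class $\phi_B(s)$ (given a policy $\pi$ that is measurable with respect to $\mathcal{Z}$, i.e. constant on bisimulation classes). Once that is established, defining $\phi(s,a) := (\phi_B(s), a)$ — or more precisely the coarsening of this induced by equality of return distributions — immediately gives a map $\mathcal{X}\to[N]$ satisfying the defining property of $Z^\pi$-irrelevance, namely that $\phi(x_1)=\phi(x_2)$ implies $\mathbb{Z}^\pi(x_1)=\mathbb{Z}^\pi(x_2)$.

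\textbf{Key steps.} First I would set up the right induction hypothesis. Let $Z^\pi_t(s,a)$ denote the $t$-step truncated return $\sum_{i=0}^{t}\gamma^i R(s_i,a_i)$ started from $(s,a)$. I claim: if $\phi_B(s_1)=\phi_B(s_2)$, then $Z^\pi_t(s_1,a)\stackrel{D}{=}Z^\pi_t(s_2,a)$ for every $a\in\mathcal{A}$ and every $t\ge 0$. The base case $t=0$ is exactly the reward-invariance clause of the bisimulation definition, $R(s_1,a)=R(s_2,a)$ (interpreting rewards as distributions $R(r|s,a)$, the clause reads $R(\cdot|s_1,a)=R(\cdot|s_2,a)$). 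For the inductive step, condition on the immediate reward and the next abstract state: the distribution of $Z^\pi_{t+1}(s,a)$ is obtained by drawing $r\sim R(\cdot|s,a)$, drawing the next abstract state $z'$ with probability $\sum_{s'\in\phi_B^{-1}(z')}P(s'|s,a)$, then — crucially, because $\pi$ is constant on classes and, by the induction hypothesis, the law of $Z^\pi_t(s',a')$ depends only on $\phi_B(s')$ — adding $r + \gamma\cdot(\text{a draw of }Z^\pi_t \text{ from class }z' \text{ under }\pi)$. Every ingredient in this recursion (the reward law, the aggregated transition probabilities, the policy) depends on $s$ only through $\phi_B(s)$ by the bisimulation axioms and the induction hypothesis, so the law of $Z^\pi_{t+1}(s_1,a)$ equals that of $Z^\pi_{t+1}(s_2,a)$. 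Then I would pass to the limit $t\to\infty$: $Z^\pi_t(s,a)\to Z^\pi(s,a)$ almost surely (the tail $\sum_{i>t}\gamma^i R$ is bounded by $\gamma^{t+1}R_{\max}/(1-\gamma)\to 0$), hence in distribution, and distributional limits are unique, giving $Z^\pi(s_1,a)\stackrel{D}{=}Z^\pi(s_2,a)$. Finally I would note that $K\to\infty$ is the regime where $\mathbb{Z}^\pi$ captures the full return distribution, so $Z^\pi(s_1,a)\stackrel{D}{=}Z^\pi(s_2,a)$ is precisely $\mathbb{Z}^\pi(s_1,a)=\mathbb{Z}^\pi(s_2,a)$; for finite $K$ the equality still holds a fortiori since the binned distribution is a function of the full one. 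This shows $\phi_B$ (lifted to state-action pairs by appending the action) is a valid $Z^\pi$-irrelevance encoder, and counting abstract classes gives $N_{\pi,\infty}\le|\phi_B(\mathcal{S})|\,|\mathcal{A}|$.

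\textbf{Main obstacle.} The delicate point is the inductive step: one must be careful that the recursion for $Z^\pi_{t+1}$ genuinely closes at the level of abstract states. This requires the policy $\pi$ to be well-defined on $\mathcal{Z}$ (which is exactly the hypothesis "$\pi$ defined over $\mathcal{Z}$" in the statement) and, for the transition term, the observation that what matters for the next step is not the exact next state $s'$ but only its class $z'=\phi_B(s')$, together with the induction hypothesis that $\mathrm{Law}(Z^\pi_t(s',a'))$ depends on $s'$ only through $\phi_B(s')$. Writing this cleanly — ideally via a distributional Bellman equation of the form $Z^\pi_{t+1}(s,a)\stackrel{D}{=} R(s,a)+\gamma Z^\pi_t(S',A')$ with $S'\sim P(\cdot|s,a)$, $A'\sim\pi(\cdot|S')$, and then projecting both $S'$ and the law onto $\mathcal{Z}$ — is the part that needs the most care; everything else is routine.
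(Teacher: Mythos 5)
Your proof is correct, and it takes a recognizably different route from the paper's. The paper argues \emph{forward in time}: it shows that if two state distributions agree when projected onto the abstract space $\mathcal{Z}$, then the step-$t$ reward distribution and the step-$(t+1)$ projected state distribution also agree, and then chains this from $s_1,s_2$ with $\phi_B(s_1)=\phi_B(s_2)$ to conclude the per-step reward distributions coincide for all $t$. You instead run a \emph{backward} induction on the law of the truncated return $Z^\pi_t$ via the distributional Bellman recursion $Z^\pi_{t+1}(s,a)\stackrel{D}{=}R(s,a)+\gamma Z^\pi_t(S',A')$, then pass to the limit $t\to\infty$. Both inductions lean on exactly the same two bisimulation clauses (reward invariance and invariance of the aggregated transition kernel) plus the hypothesis that $\pi$ is constant on bisimulation classes, so the ingredients are identical. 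What your version buys is rigor at the final step: the paper only establishes that the \emph{marginal} reward distributions $R_1^{(t)}\stackrel{D}{=}R_2^{(t)}$ agree at each step, and equality of marginals does not by itself give equality in distribution of the discounted \emph{sum} $\sum_t\gamma^t R^{(t)}$ — one needs the joint law of the reward sequence, which the paper leaves implicit. Your induction carries the full law of the partial return through each step, so this gap never opens; the only points requiring care (and which you flag correctly) are the independence of the immediate reward from the continuation given $(s,a)$, and the a.s.\ convergence $Z^\pi_t\to Z^\pi$ from boundedness of rewards. Your closing remark that binned equality $\mathbb{Z}^\pi(x_1)=\mathbb{Z}^\pi(x_2)$ follows a fortiori for finite $K$, and the count $N_{\pi,\infty}\le|\phi_B(\mathcal{S})|\,|\mathcal{A}|$, match the paper's discussion following the proposition.
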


Consider a state-action abstraction $\tilde{\phi}_B$ that is augmented from the coarsest bisimulation $\phi_B$: 
$\tilde{\phi}_B(s_1,a_1) = \tilde{\phi}_B(s_2,a_2)$ if and only if $\phi_B(s_1) = \phi(s_2)$ and $a_1 = a_2$.
The proposition indicates that $|\phi_B(\mathcal{S})|\,|\mathcal{A}|=|\tilde{\phi}_B(\mathcal{X})| \ge N_{\pi,\infty} \ge N_{\pi,K}$ for any $K$ and for any $\pi$ defined over $\mathcal{Z}$, i.e., bisimulation is no coarser than $Z^\pi$-irrelevance. 
Note that there exists an optimal policy that is defined over $\mathcal{Z}$ \citep{li2006towards}.
In practice, $Z^\pi$-irrelevance should be far more coarser than bisimulation when we only consider one specific policy $\pi$.
Therefore learning a $Z^\pi$-irrelevance should be easier than learning a bisimulation.


\begin{proof}
First, for a fixed policy $\pi: \mathcal{Z} \to \Delta^\mathcal{A}$, we prove that 
if two state distributions projected to $\mathcal{Z}$ are identical,
the corresponding reward distributions or the next state distributions projected to $\mathcal{Z}$
will be identical.
Then, we use such invariance property to
prove the proposition.

The proof for the invariance property goes as follows:
Consider two identical state distributions over $\mathcal{Z}$ on the $t$-th step, $P_1, P_2 \in \Delta^{\mathcal{Z}}$ such that $P_1 = P_2$.
Notice that we only require that the two state distributions are identical when projected to $\mathcal{Z}$, so therefore they may be different on $\mathcal{S}$.
Specifically, if we denote the state distribution over $\mathcal{S}$ as $P(s) = P(z)q(s|z)$ where $\phi_B(s) = z$,
the distribution $q$ for the two state distributions may be different.
However, we will show that this is sufficient to ensure that the corresponding state distributions on $\mathcal{Z}$ are identical on the next step (and therefore the subsequent steps).

We denote $R_1$ and $R_2$ as the reward on the $t$-th step, which are random variables.
The reward distribution on the $t$-th step is specified as follows:
\begin{equation}
\begin{aligned}
P(R_1 = r | P_1) = & \sum_{z\in\mathcal{Z},a\in\mathcal{A},s\in\mathcal{S}} P(r|s,a) q_1(s|z) \pi(a|z) P_1(z) \\
= & \sum_{z\in\mathcal{Z},a\in\mathcal{A}} P(r|z,a) \pi(a|z) P_1(z) \\
= & \sum_{z\in\mathcal{Z},a\in\mathcal{A}} P(r|z,a) \pi(a|z) P_2(z) \\
= & P(R_2 = r | P_2),
\end{aligned}
\end{equation}
for any $r\in\mathbb{R}$, where we use the property of bisimulation, $R|s,a \stackrel{D}{=} R|s',a, \forall \phi_B(s) = \phi_B(s')$, in the second equation. This indicates that 
$R_1|P_1 \stackrel{D}{=} R_2|P_2$.

Similarly, we denote $P'_1$ and $P'_2$ as the state distribution over $\mathcal{Z}$ on the $(t+1)$-th step. We have
\begin{equation}
\begin{aligned}
P'_1(z')| P_1 = & \sum_{z\in\mathcal{Z},a\in\mathcal{A}}
\sum_{s\in\mathcal{S}} 
\left( \sum_{s'\in\phi_B^{-1}(z')} P(s'|s,a) \right)
q_1(s|z) \pi(a|z) P_1(z) \\
= & \sum_{z\in\mathcal{Z},a\in\mathcal{A}} P(z'|z,a) \pi(a|z) P_1(z) \\
= & \sum_{z\in\mathcal{Z},a\in\mathcal{A}} P(z'|z,a) \pi(a|z) P_2(z) \\
= & P'_2(z')| P_1,
\end{aligned}
\end{equation}
for any $z'\in\mathcal{Z}$, where we also use the property of bisimulation in the second equation, i.e., the quantity in the bracket equals for different $s$ such that $q(s|z) > 0$ with a fixed $z$. This indicates that $P'_1|P_1 = P'_2|P_2$.

At last, 
consider two state $s_1, s_2\in\mathcal{S}$ with $\phi_B(s_1)=\phi_B(s_2)=z$ on the first step, and we take $a\in\mathcal{A}$ for both states on this step and later take the actions following $\pi$.
We denote the subsequent reward and state distribution over $\mathcal{Z}$ as $R_1^{(1)}, P_1^{(2)}, R_1^{(2)}, P_1^{(3)}, \cdots $ and $R_2^{(1)}, P_2^{(2)}, R_2^{(2)}, P_2^{(3)}, \cdots $ respectively, where the superscripts indicate time steps.
Therefore, we can deduce that $P_1^{(1)} = P_2^{(1)}, R_1^{(1)} \stackrel{D}{=} R_2^{(1)}, P_1^{(2)} = P_2^{(2)}, R_1^{(2)} \stackrel{D}{=} R_2^{(2)}, \cdots $ and therefore $Z^\pi(s_1, a) \stackrel{D}{=} Z^\pi(s_2, a), \forall a \in \mathcal{A}$.

\end{proof}

\subsection{\tcr{Comparison with $\pi$-bisimulation}}

\tcr{Similar to $Z^\pi$-irrelevance, a recently proposed state abstraction $\pi$-bisimulation \citep{castro2020scalable} is also tied to a behavioral policy $\pi$. 
It is beneficial to compare the coarseness of $Z^\pi$-irrelevance and $\pi$-bisimulation.
For completeness, we restate the definition of $\pi$-bisimulation.}

\tcr{
\begin{definition}[$\pi$-bisimulation \citep{castro2020scalable}]
Given a policy $\pi$, $\phi_{B, \pi}$ is a $\pi$-bisimulation if $\forall s_1, s_2 \in \mathcal{S}$ where $\phi_{B, \pi}(s_1) = \phi_{B, \pi}(s_2)$, 
\begin{equation*}
\begin{aligned}
\sum_a \pi(a|s_1) R(s_1, a) & = \sum_a \pi(a|s_2) R(s_2, a) \\
\sum_a \pi(a|s_1) \sum_{s'\in \phi_{B, \pi}^{-1}(z')} P(s'|s_1,a) & = \sum_a \pi(a|s_2) \sum_{s'\in \phi_{B, \pi}^{-1}(z')} P(s'|s_2,a), \qquad \forall z' \in \mathcal{Z},
\end{aligned}
\end{equation*}
where $\mathcal{Z}:=\phi_{B, \pi}(\mathcal{S})$ is the set of abstract states.
\end{definition}
}

\tcr{However, $\pi$-bisimulation does not consider the state-action pair that is not visited under the policy $\pi$ (e.g., a state-action pair $(s,a)$ when $\pi(a|s)=0$), whereas $Z^\pi$-irrelevance is defined on all the state-action pairs.
Therefore, it is hard to build connection between them unless we also define $Z^\pi$-irrelevance on the state space (instead of the state-action space) in a similar way.}

\tcr{
\begin{definition}[$Z^\pi$-irrelevance on the state space]
Given $s\in\mathcal{S}$, we denote $\mathbb{Z}^\pi(s) := \sum_a \pi(a|s) \mathbb{Z}^\pi(s,a)$.
Given a policy $\pi$, $\phi$ is a $Z^\pi$-irrelevance if $\forall s_1, s_2 \in \mathcal{S}$ where $\phi(s_1) = \phi(s_2)$, 
$\mathbb{Z}^\pi(s_1) = \mathbb{Z}^\pi(s_2)$.
\end{definition}
}

\tcr{Based on the above definitions, the following proposition indicates that such $Z^\pi$-irrelevance is coarser than $\pi$-bisimulation.}

\tcr{
\begin{proposition}
Given a policy $\pi$ and $\phi_{B, \pi}$ to be the coarsest $\pi$-bisimulation, if $\forall s_1, s_2\in \mathcal{S}$ where $\phi_{B, \pi}(s_1)=\phi_{B, \pi}(s_2)$, then $\mathbb{Z}^\pi(s_1) = \mathbb{Z}^\pi(s_2)$.
\end{proposition}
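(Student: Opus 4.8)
The plan is to mirror the proof of Proposition~\ref{prop:formal_coarser}, with the per-action invariance of an ordinary bisimulation replaced by the policy-averaged invariance that is built into a $\pi$-bisimulation. Write $\mathcal{Z} := \phi_{B,\pi}(\mathcal{S})$ for the set of abstract states. For $z \in \mathcal{Z}$, let $\bar{R}(z)$ denote the law of the reward obtained by picking any $s \in \phi_{B,\pi}^{-1}(z)$, then drawing $a \sim \pi(\cdot|s)$ and $r \sim R(\cdot|s,a)$, and let $\bar{P}(z'|z) := \sum_a \pi(a|s)\sum_{s' \in \phi_{B,\pi}^{-1}(z')} P(s'|s,a)$ for any $s \in \phi_{B,\pi}^{-1}(z)$. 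The two defining conditions of a $\pi$-bisimulation say exactly that $\bar{R}(z)$ and $\bar{P}(\cdot|z)$ are independent of the representative $s$ chosen, so both are well defined on $\mathcal{Z}$.

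First I would establish a one-step invariance lemma: if $P_1, P_2 \in \Delta^{\mathcal{S}}$ have the same pushforward onto $\mathcal{Z}$, then (i)~the reward obtained by sampling $s \sim P_i$, $a \sim \pi(\cdot|s)$, $r \sim R(\cdot|s,a)$ has the same law for $i = 1,2$, and (ii)~the resulting next-state distribution has the same pushforward onto $\mathcal{Z}$ for $i = 1,2$. This is the computation in the proof of Proposition~\ref{prop:formal_coarser}: factor $P_i(s) = P_i(z)\,q_i(s|z)$ with $z = \phi_{B,\pi}(s)$, group the $s$-sum over each fibre $\phi_{B,\pi}^{-1}(z)$, and use that $\bar{R}(z)$ and $\bar{P}(\cdot|z)$ pull out of the inner sum $\sum_s q_i(s|z) = 1$, so that only the pushforward $P_i(z)$ survives.

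Then I would unroll. Fix $s_1, s_2$ with $\phi_{B,\pi}(s_1) = \phi_{B,\pi}(s_2) = z_0$; the Dirac laws $\delta_{s_1}, \delta_{s_2}$ share the pushforward $\delta_{z_0}$. Rolling out $\pi$ from each and applying the lemma inductively, the law of the reward $r_t$ at every step $t$ coincides across the two processes, and the step-$t$ state distributions keep a common pushforward onto $\mathcal{Z}$, so the hypothesis of the lemma is restored at step $t+1$. Hence $Z^\pi(s_1) = \sum_t \gamma^t r_t \stackrel{D}{=} Z^\pi(s_2)$, and discretising into the $K$ return intervals yields $\mathbb{Z}^\pi(s_1) = \mathbb{Z}^\pi(s_2)$.

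The hard part will be the last implication, since $\sum_t \gamma^t r_t$ depends on the joint law of the entire reward trajectory, not merely on the per-step marginals; one must argue that the distribution over $\mathcal{Z}$ is a sufficient statistic carrying the law of the future rewards forward, equivalently that $(\phi_{B,\pi}(s_t), r_t)_{t\ge 0}$ is, in distribution, a function of $z_0$ alone. This is precisely why the one-step lemma is phrased for distributions over $\mathcal{Z}$ rather than pointwise in $s$, and the coupling of the immediate reward $r_t$ with the next state $s_{t+1}$ through the latent action $a_t$ is the point requiring the most care; coarseness of $\phi_{B,\pi}$ is not used in the invariance argument itself, only to make $\mathcal{Z}$ as small as possible.
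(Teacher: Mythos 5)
Your plan follows the same route as the paper's own (one-line) argument for this proposition, and as its proof of Proposition~\ref{prop:formal_coarser}: propagate by induction the invariance of the per-step reward law and of the state distribution pushed forward onto $\mathcal{Z}$. You are right to single out the final implication as the hard part, but the difficulty you flag is not merely delicate --- it is fatal, and the statement as written is in fact false. A $\pi$-bisimulation constrains only two \emph{marginals} at each step: the law of $r_t$ given $s_t$ (averaged over $a\sim\pi$) and the law of $\phi_{B,\pi}(s_{t+1})$ given $s_t$ (averaged over $a\sim\pi$). It does not constrain their \emph{joint} law, because integrating out the latent action couples them, and $\sum_t\gamma^t r_t$ depends on that joint law. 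Concretely: take states $s_1,s_2,u,v$, two actions, $\pi$ uniform; from $s_1$, action $a$ gives reward $1$ and leads to $u$ while action $b$ gives reward $0$ and leads to $v$; from $s_2$ the pairing is swapped ($a$ gives $1$ and leads to $v$, $b$ gives $0$ and leads to $u$); $u$ then yields one further reward of $10$ and $v$ yields $0$, after which everything terminates. The partition $\{\{s_1,s_2\},\{u\},\{v\}\}$ satisfies both $\pi$-bisimulation conditions, so the coarsest $\phi_{B,\pi}$ identifies $s_1$ and $s_2$; yet with $\gamma=0.9$ the return from $s_1$ is uniform on $\{0,10\}$ and from $s_2$ uniform on $\{1,9\}$, so $Z^\pi(s_1)\not\stackrel{D}{=}Z^\pi(s_2)$ and $\mathbb{Z}^\pi(s_1)\neq\mathbb{Z}^\pi(s_2)$ for any $K$ fine enough to separate these values. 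Only the means agree, which is exactly why $\pi$-bisimulation preserves $V^\pi$ but not the return distribution.

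Note that this failure does not afflict your adaptation's template, Proposition~\ref{prop:formal_coarser}: there the invariance is \emph{per action}, and conditional on $(s_t,a_t)$ the reward and the next state are independent, so the joint law of $(r_t,\phi_B(s_{t+1}))$ given $(s_t,a_t)$ is a product of two invariant marginals and the induction genuinely carries the law of the whole trajectory $(\phi_B(s_t),r_t)_{t\ge 0}$. For $\pi$-bisimulation no such factorization survives once $a_t$ is integrated out, so the one-step lemma you propose is too weak to restore its own hypothesis jointly with the reward history. To salvage the claim one must either weaken the conclusion to $V^\pi(s_1)=V^\pi(s_2)$ or strengthen the definition of $\pi$-bisimulation to require invariance of the joint law of the reward and the abstract successor; the paper's own proof by ``induction on per-step distributions'' has exactly the same gap.
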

}
\begin{proof}
\tcr{
Starting from $s_1$ and $s_2$ and following the policy $\pi$, the reward distribution and the state distribution over $\mathcal{Z}$ on each step are identical, which can be proved by induction. Then, we can conclude that $Z^\pi(s_1) \stackrel{D}{=} Z^\pi(s_2)$ and thus $\mathbb{Z}^\pi(s_1) = \mathbb{Z}^\pi(s_2)$.
}
\end{proof}

\subsection{Bound on the representation error}

For completeness, we restate Proposition \ref{prop:aligned}.

\begin{proposition}
Given a policy $\pi$ and any $Z^\pi$-irrelevance $\phi:\mathcal{X}\to [N]$, there exists a function ${Q}: [N] \to \mathbb{R}$ such that $|{Q}({\phi}(x)) - Q^\pi(x)| \le \frac{R_{\max}-R_{\min}}{K}, \forall x \in \mathcal{X}$. 
\end{proposition}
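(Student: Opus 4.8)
The plan is to construct the function $Q:[N]\to\mathbb{R}$ explicitly from the $Z^\pi$-irrelevance abstraction and then bound the discretization error. First I would observe that for any state-action pair $x\in\mathcal{X}$, the true Q value is $Q^\pi(x) = \mathbb{E}[Z^\pi(x)]$, and that the vector $\mathbb{Z}^\pi(x)\in\Delta^K$ records the probabilities $\text{Pr}[R_{k-1} < Z^\pi(x) \le R_k]$ for the $K$ return bins. The key idea is to define, for each abstract state $n\in[N]$, a representative value $Q(n) := \sum_{k=1}^K \mathbb{Z}^\pi(x_n)_k \cdot \tilde{R}_k$, where $x_n$ is any pre-image of $n$ under $\phi$ (well-defined since $\phi$ is onto $[N]$) and $\tilde{R}_k$ is a chosen point in the $k$-th bin, say the midpoint $\tilde{R}_k = (R_{k-1}+R_k)/2$ or an endpoint. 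This is well-defined because the defining property of $Z^\pi$-irrelevance guarantees that $\mathbb{Z}^\pi(x_1) = \mathbb{Z}^\pi(x_2)$ whenever $\phi(x_1)=\phi(x_2)$, so $Q(\phi(x))$ does not depend on which pre-image we pick.

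Next I would bound $|Q(\phi(x)) - Q^\pi(x)|$. Write $Q^\pi(x) = \mathbb{E}[Z^\pi(x)] = \sum_{k=1}^K \mathbb{E}[Z^\pi(x)\cdot \mathbb{I}[b(Z^\pi(x))=k]]$, which expands to $\sum_k \text{Pr}[b(Z^\pi(x))=k]\cdot \mathbb{E}[Z^\pi(x) \mid b(Z^\pi(x))=k]$. Comparing term by term with $Q(\phi(x)) = \sum_k \mathbb{Z}^\pi(x)_k \cdot \tilde{R}_k$ and using that $\mathbb{Z}^\pi(x)_k = \text{Pr}[b(Z^\pi(x))=k]$, the difference is $\sum_k \mathbb{Z}^\pi(x)_k \big(\tilde{R}_k - \mathbb{E}[Z^\pi(x)\mid b(Z^\pi(x))=k]\big)$. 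Since conditioned on falling in bin $k$, both $\tilde{R}_k$ and the conditional expectation lie in the interval $(R_{k-1}, R_k]$ of width $(R_{\max}-R_{\min})/K$, each parenthesized term has absolute value at most $(R_{\max}-R_{\min})/K$. A convexity/triangle-inequality step together with $\sum_k \mathbb{Z}^\pi(x)_k = 1$ then gives $|Q(\phi(x)) - Q^\pi(x)| \le (R_{\max}-R_{\min})/K$, as claimed.

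There is essentially no hard obstacle here; the proposition is a clean discretization estimate and the only subtlety is bookkeeping — making sure the representative value $Q(n)$ is genuinely well-defined (this is exactly where the $Z^\pi$-irrelevance property is used) and that the per-bin error is bounded by the bin width rather than something larger. If the authors instead prefer a cleaner bound, one can alternatively take $\tilde{R}_k$ to be $R_k$ itself (or $R_{k-1}$), in which case $\tilde{R}_k - \mathbb{E}[\cdot\mid b=k]$ is a one-sided quantity in $[0, (R_{\max}-R_{\min})/K)$, and the same conclusion follows. I would then remark, as the paper does in the surrounding text, that letting $K\to\infty$ drives this error to zero, so the abstraction can represent $Q^\pi$ (and hence $Q^*$ when $\pi=\pi^*$) arbitrarily accurately.
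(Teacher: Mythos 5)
Your proof is correct and follows essentially the same route as the paper's: both construct $Q$ on each abstract class from a representative (the paper assigns $Q(\phi(x))\leftarrow Q^\pi(x')$ for some pre-image $x'$, while you assign the discretized expectation $\sum_k \mathbb{Z}^\pi(x')_k\tilde{R}_k$), and both reduce the error bound to the observation that two returns with identical bin probabilities have expectations within one bin width of each other. Your per-bin decomposition actually spells out the step the paper leaves implicit, so no gap either way.
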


\begin{proof}
Given a policy $\pi$ and a $Z^\pi$ irrelevance $\phi$, we can construct a ${Q}$ such that ${Q}({\phi}(x)) = Q^\pi(x), \forall x \in \mathcal{X}$ in the following way:
For all $x\in\mathcal{X}$, one by one, we assign $Q(z) \leftarrow Q^\pi(x)$, where $z = \phi(x)$.
In this way, for any $x\in\mathcal{X}$ with $z=\phi(x)$, $Q(z)=Q^\pi(x')$ for some $x'\in\mathcal{X}$ such that $\mathbb{Z}^\pi(x')=\mathbb{Z}^\pi(x)$.
This implies that $|Q(z)-Q^\pi(x)|=|Q^\pi(x')-Q^\pi(x)|\le\frac{R_{\max} - R_{\min}}{K}$.
This also applies to the optimal policy $\pi^*$.
\end{proof}

\section{Proof}
\label{app:proof}
Notice that Corollary \ref{corollary:main} is the asymptotic case ($n\to\infty$) for Theorem \ref{theorem:main}.
We first provide a sketch proof for Corollary \ref{corollary:main} which ignores sampling issues and thus more illustrative.
Later, we provide the proof for Theorem \ref{theorem:main} which mainly follows the techniques used in \cite{misra2019kinematic}.

\subsection{Proof of Corollary \ref{corollary:main}}

Recall that Z-learning aims to solve the following optimization problem:

\begin{equation}
\label{eq:optimization_app}
    \min_{
    \phi \in \Phi_N, w\in\mathcal{W}_N
    } \mathcal{L}(\phi, w;\mathcal{D}) := \mathbb{E}_{(x_1, x_2, y)\sim \mathcal{D}} \left[ (w(\phi(x_1), \phi(x_2)) - y)^2 \right],
\end{equation}
which can also be regarded as finding a compound predictor $f(\cdot, \cdot) := w(\phi(\cdot), \phi(\cdot))$ over the function class $\mathcal{F}_N:=\{(x_1, x_2)\to w(\phi(x_1), \phi(x_2)): w\in\mathcal{W}_N, \phi\in\Phi_N\}$.

For the first step, it is helpful to find out the Bayes optimal predictor $f^*$ when size of the dataset is infinite.
We notice 
the fact that
the Bayes optimal predictor for a square loss is the conditional mean, i.e., given a distribution $D$, the Bayes optimal predictor $f^* = arg \min_f \mathbb{E}_{(x,y)\sim D}[(f(x)-y)^2]$ satisfies $f^*(x')=\mathbb{E}_{(x,y)\sim D}[y\mid x=x']$.
Using this property, we can obtain the Bayes optimal predictor over all the functions $\{\mathcal{X}\times \mathcal{X} \to [0,1]\}$ for our contrastive loss:
\begin{equation}
\label{eq:bayes_optimal}
\begin{aligned}
f^*(x_1, x_2) & = \mathbb{E}_{(x'_1, x'_2, y)\sim D} [y\mid x'_1=x_1, x'_2=x_2] \\
& = \mathbb{E}_{R_1 \sim Z^\pi(x_1), R_2 \sim Z^\pi(x_2)} \mathbb{I}[b(R_1) \neq b(R_2)] \\
&= 1- \mathbb{Z}^\pi(x_1)^T \mathbb{Z}^\pi(x_2),
\end{aligned}
\end{equation}
where we use $D$ to denote the distribution 
from which each tuple in the dataset $\mathcal{D}$ is drawn.

To establish the theorem, we require that such an optimal predictor $f^*$ to be in the function class 
$\mathcal{F}_N$.
Following a similar argument to Proposition 10 in \citet{misra2019kinematic}, it is not hard to show that using $N > N_{\pi,K}$ is sufficient for this realizability condition to hold.

{\textbf{Corollary 4.1.1.}\ \textit{The encoder $\hat{\phi}$ returned by Algorithm \ref{algo:z-learning} with $n\to\infty$ is a $Z^\pi$-irrelevance, 
i.e., 
for any $x_1, x_2\in\mathcal{X}$, $\mathbb{Z}^\pi(x_1) 
= \mathbb{Z}^\pi(x_2)$ if $\hat{\phi}(x_1)=\hat{\phi}(x_2)$.}  
}

\begin{proof}[Proof of Corollary \ref{corollary:main}]
Considering the asymptotic case (i.e., $n\to\infty$), we have $\hat{f} = f^*$ where $\hat{f}(\cdot, \cdot):=\hat{w}(\hat{\phi}(\cdot), \hat{\phi}(\cdot))$ and $\hat{w}$ and $\hat{\phi}$ is returned by Algorithm \ref{algo:z-learning}. If $\hat{\phi}(x_1) = \hat{\phi}(x_2)$, we have for any $x \in \mathcal{X}$, 
\begin{equation*}
\begin{aligned}
    1 - \mathbb{Z}^\pi(x_1)^T Z^\pi(x) = f^*(x_1, x) = \hat{f}(x_1, x) = \hat{f}(x_2, x) = f^*(x_2, x) = 1 - \mathbb{Z}^\pi(x_2)^T \mathbb{Z}^\pi(x).
\end{aligned}
\end{equation*}
We obtain $\mathbb{Z}^\pi(x_1) = \mathbb{Z}^\pi(x_2)$ by letting $x=x_1$ or $x=x_2$.
\end{proof}

\subsection{Proof of Theorem \ref{theorem:main}}


\tcr{
\textbf{Theorem 4.1.}\ \textit{Given the encoder $\hat{\phi}$ returned by Algorithm \ref{algo:z-learning}, the following inequality holds with probability $1-\delta$ and for any $x'\in\mathcal{X}$:}
\begin{equation}
\begin{aligned}
\mathbb{E}_{x_1 \sim d, x_2\sim d} & \Big[ \mathbb{I}[\hat{\phi}(x_1)=\hat{\phi}(x_2)] \, \Big|
 \mathbb{Z}^\pi(x')^T \left( \mathbb{Z}^\pi(x_1) - \mathbb{Z}^\pi(x_2) \right)
\Big| \Big] \\
& \quad \quad \le \sqrt{\frac{8N}{n} \Big( 3 + 4N^2 \ln n + 4 \ln |\Phi_N| + 4 \ln (\frac{2}{\delta}) \Big) },
\end{aligned}
\end{equation}
\textit{where $|\Phi_N|$ is the cardinality of encoder function class and $n$ is the size of the dataset.}
}

\tcr{The theorem shows that 
1) whenever $\hat{\phi}$ maps two state-actions $x_1, x_2$ to the same abstraction, $\mathbb{Z}^\pi(x_1) \approx \mathbb{Z}^\pi(x_2)$ up to an error proportional to $1/\sqrt{n}$ (ignoring the logarithm factor), and
2) when the difference of the return distributions (e.g., $\mathbb{Z}^\pi(x_1) - \mathbb{Z}^\pi(x_2)$) is large, the chance that two state-action pairs (e.g., $x_1$ and $x_2$) are assigned with the same encoding is small.  
However, since state-action pairs are sampled i.i.d. from the distribution $d$, the bound holds in an average sense instead of the worse-case sense.
}

\tcr{
The overview of the proof is as follows:
Note that the theorem builds connection between the optimization problem defined in \eqref{eq:optimization_app} and the learned encoder $\hat{\phi}$. 
To prove the theorem, we first find out the minimizer $f^*$ of the optimization problem when the number of samples is infinite (which was calculated in \eqref{eq:bayes_optimal} in the previous subsection).
Then, we bound the difference between the learned predictor $\hat{f}$ and $f^*$ in Lemma \ref{lemma:concentration} (which utilizes Lemma \ref{lemma:pointwise_covering_number}) when the number of samples is finite. 
At last, utilizing the compound structure of $\hat{f}$, we can relate it to the encoder $\hat{\phi}$.
Specifically, given $x_1, x_2 \in \mathcal{X}$ with $\hat{\phi}(x_1) = \hat{\phi}(x_2)$, we have $\hat{f}(x_1, x') = \hat{f}(x_2, x'), \forall x'\in\mathcal{X}$. 
}


\begin{definition}[Pointwise covering number]
Given a function class $\mathcal{G}: \mathcal{X} \to \mathbb{R}$, the pointwise covering number at scale $\epsilon$, $\mathcal{N}(\mathcal{G}, \epsilon)$, is the size of the set $V:\mathcal{X} \to \mathbb{R}$ such that $\forall g \in \mathcal{G}, \exists v\in V: \sup_{x\in\mathcal{X}} |g(x) - v(x)| < \epsilon$.
\end{definition}

\begin{lemma}
\label{lemma:pointwise_covering_number}
The logarithm of the pointwise covering number of the function class $\mathcal{F}_N$ satisfies
$\ln \mathcal{N}(\mathcal{F}_N, \epsilon) \le N^2 \ln (\frac{1}{\epsilon}) + \ln |\Phi_N|$.
\end{lemma}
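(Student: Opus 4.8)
\textbf{Proof plan for Lemma \ref{lemma:pointwise_covering_number}.}

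The plan is to build a pointwise cover of $\mathcal{F}_N = \{(x_1,x_2)\mapsto w(\phi(x_1),\phi(x_2)) : w\in\mathcal{W}_N,\ \phi\in\Phi_N\}$ by covering the two ingredients separately and then composing. The key observation is that a function $f = w\circ(\phi,\phi)\in\mathcal{F}_N$ is completely determined by the \emph{discrete} encoder $\phi:\mathcal{X}\to[N]$ together with the \emph{finite table} $w:[N]\times[N]\to[0,1]$, which has only $N^2$ entries. The encoder needs no quantization at all: there are exactly $|\Phi_N|$ choices of $\phi$, and since each $\phi(x)$ takes values in the finite set $[N]$, two functions $w_1\circ(\phi,\phi)$ and $w_2\circ(\phi,\phi)$ built from the \emph{same} $\phi$ differ pointwise by at most $\max_{i,j}|w_1(i,j)-w_2(i,j)|$. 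So it suffices to cover the $N^2$-dimensional cube $[0,1]^{N^2}$ (the space of tables $w$) in the $\ell^\infty$ norm at scale $\epsilon$.

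Concretely, first I would fix a grid $G_\epsilon := \{0, \epsilon, 2\epsilon, \dots\}\cap[0,1]$, which has at most $\lceil 1/\epsilon\rceil \le 2/\epsilon$ points (and for a clean bound one can use $|G_\epsilon|\le 1/\epsilon$ if one is slightly careful, or just absorb constants). The set of all tables with entries in $G_\epsilon$ has size at most $(1/\epsilon)^{N^2}$ up to constants. Then the cover $V$ of $\mathcal{F}_N$ is taken to be $\{\tilde w\circ(\phi,\phi) : \phi\in\Phi_N,\ \tilde w\in G_\epsilon^{[N]\times[N]}\}$, whose cardinality is at most $|\Phi_N|\cdot(1/\epsilon)^{N^2}$. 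Given any $f = w\circ(\phi,\phi)\in\mathcal{F}_N$, round each entry of $w$ to the nearest grid point to get $\tilde w$; then for every $(x_1,x_2)$,
\[
|f(x_1,x_2) - \tilde w(\phi(x_1),\phi(x_2))| = |w(\phi(x_1),\phi(x_2)) - \tilde w(\phi(x_1),\phi(x_2))| < \epsilon,
\]
so $V$ is a valid $\epsilon$-pointwise-cover. Taking logarithms gives $\ln\mathcal{N}(\mathcal{F}_N,\epsilon) \le N^2\ln(1/\epsilon) + \ln|\Phi_N|$, which is exactly the claimed bound.

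I do not expect a serious obstacle here; the only mild subtlety is the bookkeeping on the size of the scalar grid, and whether one wants $\lceil 1/\epsilon\rceil$ or $1/\epsilon$ in the base of the exponent — the stated bound uses $1/\epsilon$, which is clean provided one either assumes $1/\epsilon$ is an integer or is willing to use a slightly finer grid of $\lfloor 1/\epsilon\rfloor + 1 \le 2/\epsilon$ points and note the extra $N^2\ln 2$ is negligible and typically suppressed. The essential point — that the discrete encoder contributes only a $\ln|\Phi_N|$ term with \emph{no} $\epsilon$-dependence, because its range is already finite — is what makes the bound so clean, and it is exactly the structural feature that lets Theorem \ref{theorem:main} go through with $N^2\ln n$ rather than a dimension-times-log term.
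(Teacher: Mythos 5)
Your proposal is correct and follows essentially the same route as the paper's proof: discretize $[0,1]$ into a grid of roughly $1/\epsilon$ points, take the cover to be all compositions of an (unquantized) encoder $\phi\in\Phi_N$ with a quantized table, and multiply the counts to get $|\Phi_N|\cdot(1/\epsilon)^{N^2}$. The mild $\lceil 1/\epsilon\rceil$ versus $1/\epsilon$ bookkeeping issue you flag is present in the paper's version as well and is immaterial to how the lemma is used.
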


\begin{proof}
Recall that $\mathcal{F}_N:=\{(x_1, x_2)\to w(\phi(x_1), \phi(x_2)): w\in\mathcal{W}_N, \phi\in\Phi_N\}$, where $\mathcal{W}_N:=\{[N]\times[N]\to[0,1]\}$ and $\Phi_N:=\{ \mathcal{X}\to[N]\}$.
Given $\epsilon > 0$, we discretize $[0,1]$ to $Y := \{\epsilon, \cdots, \lceil \frac{1}{\epsilon} \rceil \epsilon \}$.
Then, we define $W_N : =\{ [N]\times [N] \to Y \}$ and it is easy to see that $W_N$ is a covering of $\mathcal{W}_N$ with $|W_N| \le (\frac{1}{\epsilon})^{N^2}$.
Next, we observe that $F_N := \{(x_1, x_2)\to w(\phi(x_1), \phi(x_2)): w\in W_N, \phi\in\Phi_N\}$ is a covering of $\mathcal{F}_N$ and $|F_N| = |\Phi_N| \, |W_N|$.
We complete the proof by combining the above results.
\end{proof}

\begin{proposition}[Proposition 12 in \citet{misra2019kinematic}]
\label{proposition:concentration}
Consider a function class $\mathcal{G}: \mathcal{X} \to \mathbb{R}$ and $n$ samples $\{(x_i, y_i)\}_{i=1}^n$ drawn from $D$, where $x_i \in \mathcal{X}$ and $y_i \in [0,1]$.
The Bayes optimal function is 
$g^* = \text{arg} \min_{g\in\mathcal{G}} \mathbb{E}_{(x,y)\sim D} \left[ (g(x) - y)^2 \right]$ and the empirical risk minimizer is 
$\hat{g} = \text{arg} \min_{g\in\mathcal{G}} \frac{1}{n} \sum_{i=1}^n \left[ (g(x_i) - y_i)^2 \right].$
With probability at least $1-\delta$ for a fixed $\delta \in (0,1)$, 
$$
\mathcal{E}_{(x,y)\sim D} \left[ (\hat{g}(x) - g^*(x))^2 \right] \le \inf_{\epsilon > 0} \left[ 
6\epsilon + \dfrac{8 \ln (2\mathcal{N}(\mathcal{G}, \epsilon) / \delta) }{n}
\right],
$$
\end{proposition}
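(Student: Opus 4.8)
This is the standard distribution-free guarantee for squared-loss empirical risk minimization under realizability (it is Proposition~12 of \citet{misra2019kinematic}), so the plan is to run the ``localized'' ERM argument. I use what is implicit in the statement, namely that $\mathcal{G}$ consists of $[0,1]$-valued functions and that $g^*$ is simultaneously the population risk minimizer over $\mathcal{G}$ and the Bayes regressor $g^*(x)=\mathbb{E}[y\mid x]$. The first step is to turn the excess risk into an $L^2$ distance: writing $R(g):=\mathbb{E}_{(x,y)\sim D}[(g(x)-y)^2]$ and $\mathrm{er}(g):=\mathbb{E}_{(x,y)\sim D}[(g(x)-g^*(x))^2]$, conditioning on $x$ makes the cross term $\mathbb{E}[(g(x)-g^*(x))(g^*(x)-y)]$ vanish, so that $R(g)-R(g^*)=\mathrm{er}(g)\ge 0$ for every $g\in\mathcal{G}$; hence it suffices to bound $\mathrm{er}(\hat{g})$.

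Next I would pass to empirical averages and isolate the variance estimate that gives the $1/n$ rather than $1/\sqrt{n}$ rate. For $g\in\mathcal{G}$ set $\zeta_g(x,y):=(g(x)-y)^2-(g^*(x)-y)^2$, so that $\mathbb{E}[\zeta_g]=\mathrm{er}(g)$ and $\tfrac{1}{n}\sum_i\zeta_g(x_i,y_i)=\hat{R}_n(g)-\hat{R}_n(g^*)$, where $\hat{R}_n$ is the empirical risk. The ERM property $\hat{R}_n(\hat{g})\le\hat{R}_n(g^*)$ gives $\tfrac{1}{n}\sum_i\zeta_{\hat{g}}(x_i,y_i)\le 0$, hence $\mathrm{er}(\hat{g})\le\mathbb{E}[\zeta_{\hat{g}}]-\tfrac{1}{n}\sum_i\zeta_{\hat{g}}(x_i,y_i)$. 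Since $g,g^*,y\in[0,1]$ we have $|\zeta_g|\le 1$ pointwise and, crucially, $\mathbb{E}[\zeta_g^2]=\mathbb{E}\!\left[(g(x)-g^*(x))^2(g(x)+g^*(x)-2y)^2\right]\le 4\,\mathrm{er}(g)$, i.e.\ the variance of the excess-loss increment is controlled by the excess risk itself.

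I would then discretize $\mathcal{G}$ by a pointwise $\epsilon$-cover $V$ with $|V|=\mathcal{N}(\mathcal{G},\epsilon)$, apply Bernstein's inequality to $\zeta_v$ for each $v\in V$, and take a union bound, so that with probability at least $1-\delta$, simultaneously for all $v\in V$,
\begin{equation}
\mathbb{E}[\zeta_v]-\frac{1}{n}\sum_{i=1}^n\zeta_v(x_i,y_i)\ \le\ \sqrt{\frac{8\,\mathrm{er}(v)\ln(2\mathcal{N}(\mathcal{G},\epsilon)/\delta)}{n}}+\frac{4\ln(2\mathcal{N}(\mathcal{G},\epsilon)/\delta)}{3n}.
\end{equation}
Choosing $v\in V$ with $\sup_x|\hat{g}(x)-v(x)|<\epsilon$ and using that $|\zeta_{\hat{g}}-\zeta_v|\le 2\epsilon$ and $|(\hat{g}-g^*)^2-(v-g^*)^2|\le 2\epsilon$ pointwise (so $\mathrm{er}(v)\le\mathrm{er}(\hat{g})+2\epsilon$ and the empirical and population averages transfer up to $O(\epsilon)$), one obtains a self-bounding inequality $\mathrm{er}(\hat{g})\le C_1\epsilon+\sqrt{\tfrac{8(\mathrm{er}(\hat{g})+2\epsilon)L}{n}}+\tfrac{4L}{3n}$ with $L=\ln(2\mathcal{N}(\mathcal{G},\epsilon)/\delta)$ and an absolute constant $C_1$.

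Finally, solving $a\le c+\sqrt{\kappa a}+d$ via $\sqrt{\kappa a}\le\tfrac{a}{2}+\tfrac{\kappa}{2}$ gives $a\le 2c+\kappa+2d$; substituting this back and collecting the $O(\epsilon)$ and $O(L/n)$ contributions yields $\mathrm{er}(\hat{g})\le 6\epsilon+\frac{8\ln(2\mathcal{N}(\mathcal{G},\epsilon)/\delta)}{n}$, and taking the infimum over $\epsilon>0$ completes the proof. The conceptual ingredients — realizability turning the excess risk into $\mathrm{er}$, the variance self-bound $\mathbb{E}[\zeta_g^2]\le 4\,\mathrm{er}(g)$, Bernstein together with a pointwise cover, and the self-bounding solve — are all routine; I expect the only real effort to be the constant bookkeeping needed to land exactly on $6\epsilon+8\ln(2\mathcal{N}/\delta)/n$ (in particular keeping the union-bound factor at $2\mathcal{N}$ and the $\epsilon$-discretization slacks sufficiently tight).
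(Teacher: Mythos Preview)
The paper does not prove this proposition at all: it is quoted verbatim as Proposition~12 of \citet{misra2019kinematic} and used as a black box to derive Lemma~\ref{lemma:concentration}. So there is no ``paper's own proof'' to compare against. Your sketch is the standard localized-ERM argument (realizability turns excess risk into $L^2$ distance, the variance self-bound $\mathbb{E}[\zeta_g^2]\le 4\,\mathrm{er}(g)$, Bernstein plus a union bound over a pointwise $\epsilon$-cover, then solve the self-bounding inequality), which is exactly how the original reference establishes it; the outline is correct and the only nontrivial work, as you note, is tracking constants.
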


\begin{lemma}
\label{lemma:concentration}
Given the empirical risk minimizer $\hat{f}$ in Algorithm \ref{algo:z-learning}, we have
\begin{equation}
\mathbb{E}_{(x_1, x_2) \sim D} \left[ (\hat{f}(x_1, x_2) - f^*(x_1, x_2))^2 \right] \le \Delta_\text{reg} =
\frac{6}{n} + \frac{8}{n} \left( 
N^2 \ln n + \ln |\Phi_N| + \ln (\frac{2}{\delta})
\right),
\end{equation}
\end{lemma}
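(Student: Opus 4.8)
\textbf{Proof proposal for Lemma \ref{lemma:concentration}.}

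The plan is to apply Proposition \ref{proposition:concentration} directly to the function class $\mathcal{F}_N$ that Z-learning optimizes over, and then plug in the covering-number bound from Lemma \ref{lemma:pointwise_covering_number}. First I would set up the correspondence: in Proposition \ref{proposition:concentration} take $\mathcal{X}$ to be the product space $\mathcal{X}\times\mathcal{X}$ of anchor/positive pairs, take the function class $\mathcal{G}$ to be $\mathcal{F}_N:=\{(x_1,x_2)\to w(\phi(x_1),\phi(x_2)): w\in\mathcal{W}_N, \phi\in\Phi_N\}$, take the samples to be the $n$ tuples $(x_1,x_2,y)$ drawn i.i.d.\ from $D$ in Algorithm \ref{algo:z-learning} (with labels $y=\mathbb{I}[b(R_1)\neq b(R_2)]\in\{0,1\}\subseteq[0,1]$), and observe that $\hat f = \hat w(\hat\phi(\cdot),\hat\phi(\cdot))$ is exactly the empirical risk minimizer over $\mathcal{F}_N$ since Line 8 of Algorithm \ref{algo:z-learning} minimizes the empirical squared loss. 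The Bayes optimal function over $\mathcal{F}_N$ is $f^*$, which was identified in \eqref{eq:bayes_optimal} as $f^*(x_1,x_2)=1-\mathbb{Z}^\pi(x_1)^T\mathbb{Z}^\pi(x_2)$; here we rely on the realizability remark just above the corollary, namely that $N\ge N_{\pi,K}$ guarantees $f^*\in\mathcal{F}_N$, so that the Bayes optimal function within the class coincides with the true conditional mean.

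Given this setup, Proposition \ref{proposition:concentration} yields
\[
\mathbb{E}_{(x_1,x_2)\sim D}\left[(\hat f(x_1,x_2)-f^*(x_1,x_2))^2\right]\le \inf_{\epsilon>0}\left[6\epsilon + \frac{8\ln(2\mathcal{N}(\mathcal{F}_N,\epsilon)/\delta)}{n}\right].
\]
Next I would substitute the pointwise covering-number estimate of Lemma \ref{lemma:pointwise_covering_number}, $\ln\mathcal{N}(\mathcal{F}_N,\epsilon)\le N^2\ln(1/\epsilon)+\ln|\Phi_N|$, to get the bound $6\epsilon + \frac{8}{n}\left(N^2\ln(1/\epsilon)+\ln|\Phi_N|+\ln(2/\delta)\right)$ for every $\epsilon>0$. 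The final step is simply to make a convenient choice of $\epsilon$ rather than genuinely optimize: taking $\epsilon=1/n$ makes the $6\epsilon=6/n$ term match the stated $6/n$, and turns $N^2\ln(1/\epsilon)$ into $N^2\ln n$, giving exactly
\[
\mathbb{E}_{(x_1,x_2)\sim D}\left[(\hat f(x_1,x_2)-f^*(x_1,x_2))^2\right]\le \frac{6}{n}+\frac{8}{n}\left(N^2\ln n+\ln|\Phi_N|+\ln(\tfrac{2}{\delta})\right)=\Delta_{\text{reg}},
\]
which is the claim.

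The argument is essentially a bookkeeping exercise once the right objects are identified, so I do not expect a serious obstacle; the only points requiring care are (i) verifying that the hypotheses of Proposition \ref{proposition:concentration} are genuinely met — in particular that the labels lie in $[0,1]$ and that the pairs $(x_1,x_2)$ together with $y$ form an i.i.d.\ sample, which follows from Lines 3--6 of Algorithm \ref{algo:z-learning} — and (ii) making sure the realizability condition $f^*\in\mathcal{F}_N$ holds, which is where the assumption $N\ge N_{\pi,K}$ and the argument sketched after \eqref{eq:bayes_optimal} (following Proposition 10 of \citet{misra2019kinematic}) are used. If one wanted to avoid even invoking realizability one could instead state the bound with $f^*$ replaced by the in-class minimizer, but since $f^*\in\mathcal{F}_N$ the two coincide and the clean form above is obtained.
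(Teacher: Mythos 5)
Your proposal is correct and follows exactly the paper's own argument: combine Proposition \ref{proposition:concentration} with the covering-number bound of Lemma \ref{lemma:pointwise_covering_number} and set $\epsilon = 1/n$. The extra care you take in checking the hypotheses (labels in $[0,1]$, i.i.d.\ sampling, realizability via $N \ge N_{\pi,K}$) is implicit in the paper but correctly identified.
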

with probability at least $1-\delta$.

\begin{proof}
This directly follows from the combination of Lemma \ref{lemma:pointwise_covering_number} and Proposition \ref{proposition:concentration} by letting $\epsilon = \frac{1}{n}$.
\end{proof}

\begin{proof}[Proof of Theorem \ref{theorem:main}]
First, we 
denote $\mathcal{E}_i := \mathbb{I}[\hat{\phi}(x_1) = i = \hat{\phi}(x_2)]$ and $b_i := \mathbb{E}_{x\sim d}\left[ \mathbb{I}[\hat{\phi}(x)=i] \right]$ to be the prior probability over the $i$-th abstraction.
Then, for all $x'\in\mathcal{X}$, we have
\begin{equation*}
\begin{aligned}
& \mathbb{E}_{x_1 \sim d, x_2 \sim d} \left[
\mathbb{I}[\hat{\phi}(x_1) = i = \hat{\phi}(x_2)] \, 
\big| \mathbb{Z}^\pi(x')^T \left( Z^\pi(x_1) - Z^\pi(x_2) \right) \big|
\right] \\
= &  
\mathbb{E}_{x_1 \sim d, x_2 \sim d} \left[
\mathcal{E}_i \, 
|f^*(x_1, x') - f^*(x_2, x')|
\right] \\
\le &
\mathbb{E}_{x_1 \sim d, x_2 \sim d} \left[
\mathcal{E}_i \, 
\left( 
|f^*(x_1, x') - \hat{f}(x_1, x')| + 
|f^*(x_1, x') - \hat{f}(x_2, x')|
\right)
\right] \\
= &
\mathbb{E}_{x_1 \sim d, x_2 \sim d} \left[
\mathcal{E}_i \, 
\left( 
|f^*(x_1, x') - \hat{f}(x_1, x')| + 
|f^*(x_1, x') - \hat{f}(x_1, x')|
\right)
\right] \\
\le &
2 \mathbb{E}_{x_1 \sim d} \left[
\mathbb{I} [\hat{\phi}(x_1) = i] \,
\big| f^*(x_1, x') - \hat{f}(x_1, x') \big|
\right] \\
\le & 
2 \sqrt{b_i \Delta_\text{reg}}. \\
\end{aligned}
\end{equation*}
The first step is obtained using Equation (\ref{eq:bayes_optimal}).
The second step is the triangle inequality.
The third step uses the fact that under event $\mathcal{E}_i$, we have $\hat{\phi}(x_1) = \hat{\phi}(x_2)$ and therefore $\hat{f}(x_1, x') = \hat{f}(x_2, x')$.
In the fourth step, we drop the dependence on the other variable, and marginalize over $D_\text{coup}$.
The last step is the Cauchy-Schwarz inequality.

At last, we complete the proof by summing over all the $i\in[N]$ and using the fact $\sum_{i=1}^N \sqrt{b_i} \le \sqrt{N}$ and Lemma \ref{lemma:concentration}.
\end{proof}

\section{Implementation Details}
\label{app:implementation}
\subsection{Implementation for Atari games}

\textbf{Codebase.}
For Atari games, we use the codebase from \url{https://github.com/Kaixhin/Rainbow} which is an implementation for data-efficient Rainbow \citep{van2019use}.

\textbf{Network architecture.}
To implement our algorithm, we modify the architecture for the value network as shown in Figure \ref{fig:nn_structure} Left.
In data-efficient Rainbow, the state embedding has a dimension of $576$. 
We maintain an action embedding for each action, which is a vector of the same dimension and treated as trainable parameters.
Then, we generate the state-action embedding by conducting an element-wise product to the state embedding and the action embedding.
This state-action embedding is shared with the auxiliary task and the main RL task.
Afterwards, the value network outputs the return distribution for this state-action pair (noting that Rainbow uses a distributional RL algorithm, C51 \citep{bellemare2017distributional}) instead of the return distributions of all actions for the input state as is in the original implementation.

\textbf{Hyperparameters.}
We use exactly the same hyperparameters as those used in \citet{van2019use} and CURL \citep{srinivas2020curl} to quantify the gain brought by our auxiliary task and compare with CURL. We refer the readers to their paper for the detailed list of the hyperparameters.

\textbf{Balancing the auxiliary loss and the main RL loss.}
Unlike CURL (or other previous work such as \citet{jaderberg2016reinforcement,yarats2019improving}) that uses different/learned coefficients/learning rates for different games to balance the auxiliary task and the RL updates, our algorithm uses equal weight and learning rate for both the auxiliary task and the main RL task.
This demonstrates the our auxiliary task is robust and does not need careful tuning for these hyperparameters compared with the previous work.

\textbf{Auxiliary loss.}
Since the rewards in Atari games are sparse, we divide the segments such that all the state-action pairs within the same segment have the same return.
This corresponds to the setting of Z-learning with $K\to\infty$ where the positive sample has exactly the same return with that of the anchor.
Then, the auxiliary loss for each update is calculated as follows:
First, we sample a batch of $64$ anchor state-action pairs from the prioritized replay memory.
Then, for each state-action pair, we sample the corresponding positive pair (i.e., the state-action pair within the same segment as the anchor state-action pair) and the corresponding negative pair (randomly selected from the replay memory).
The auxiliary loss is calculated on these samples with effectively $|\mathcal{D}|=128$.

\subsection{Implementation for DMControl suite}

\textbf{Codebase.}
We use SAC as the base RL algorithm and build our algorithm on top of the publicly released implementation from CURL \citep{srinivas2020curl}.

\textbf{Network architecture.} 
Similarly, we modify the architecture for the critic network in SAC. 
In SAC, the state embedding has a dimension of 50.
Since the actions are continuous vectors of dimension $d$ in the continuous control tasks of DMControl suite, we directly concatenate the action to the state embedding, resulting in a state-action embedding of size $50+d$.
Then, the critic network receives the state-action embedding as the input and outputs the Q value.
The actor network receives the state embedding as the input and output the action selection distribution on the corresponding state.
Note that, although our auxiliary loss is based on the state-action embedding, the state embedding used by the actor network is also trained by the auxiliary loss through back-propagation of the gradients.

\textbf{Hyperparameters.}
We set the threshold for dividing the segments to $1.0$, i.e., when appending transitions to the replay buffer, we start a new segment when the cumulative reward within the last segment exceeds this threshold.
The auxiliary loss and the hyperparameters to balance the auxiliary loss and the main RL loss are the same as those used for Atari games.
Other hyperparameters we use are exactly the same as those in CURL implementation and we refer the readers to their paper for the details.

\section{\tcr{Additional Experiment Results}}
\subsection{\tcr{More Seeds Results of Representation Analysis}}
\tcr{
For clearness, we only show the result of representation analysis with a single seed in the main text. We add the results for multiple seeds here. The detailed description of analysis task can be found in the first paragraph in Section 5.2.  
}

\begin{figure}[htbp]
    \centering
    \includegraphics[width=0.5\textwidth]{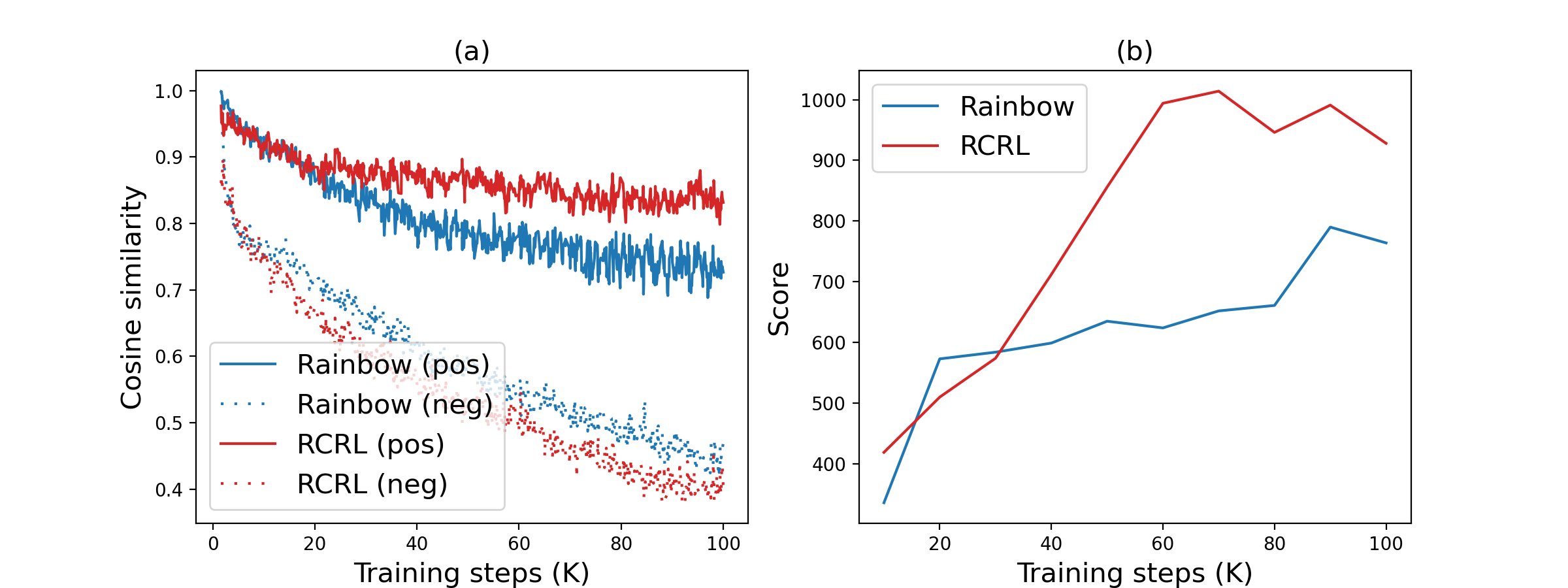}
    \includegraphics[width=0.5\textwidth]{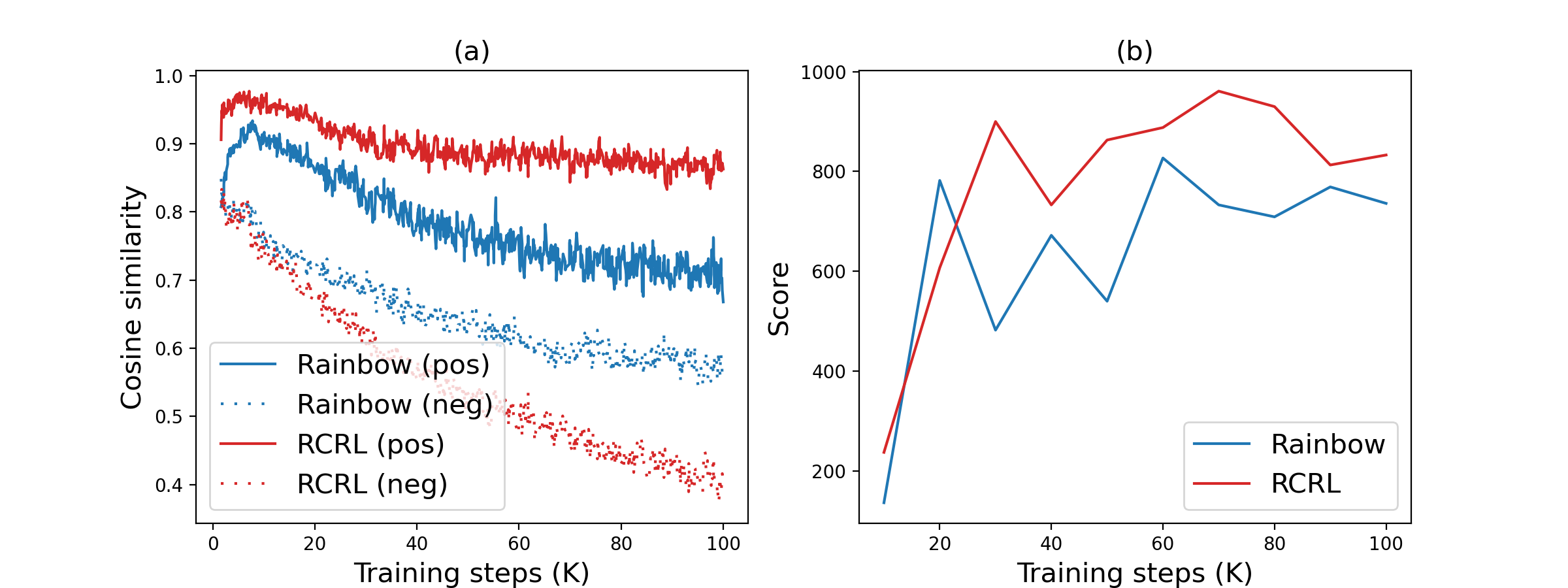}
    \includegraphics[width=0.5\textwidth]{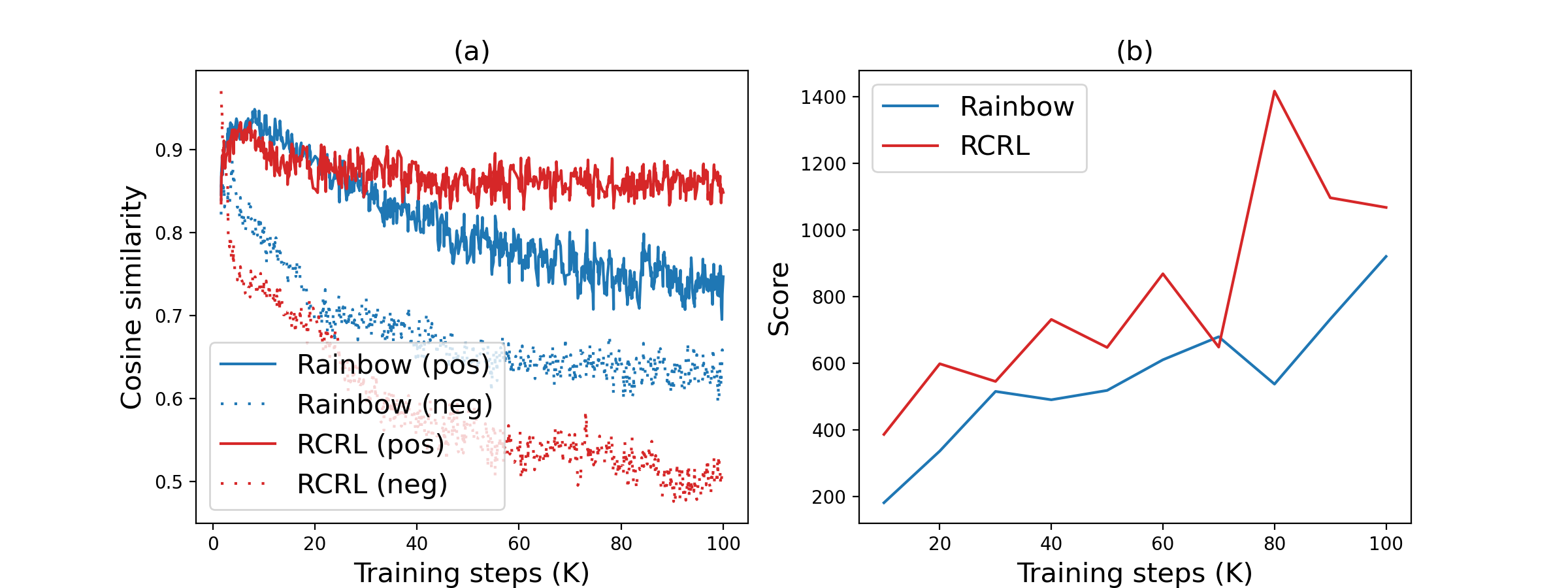}
    \includegraphics[width=0.5\textwidth]{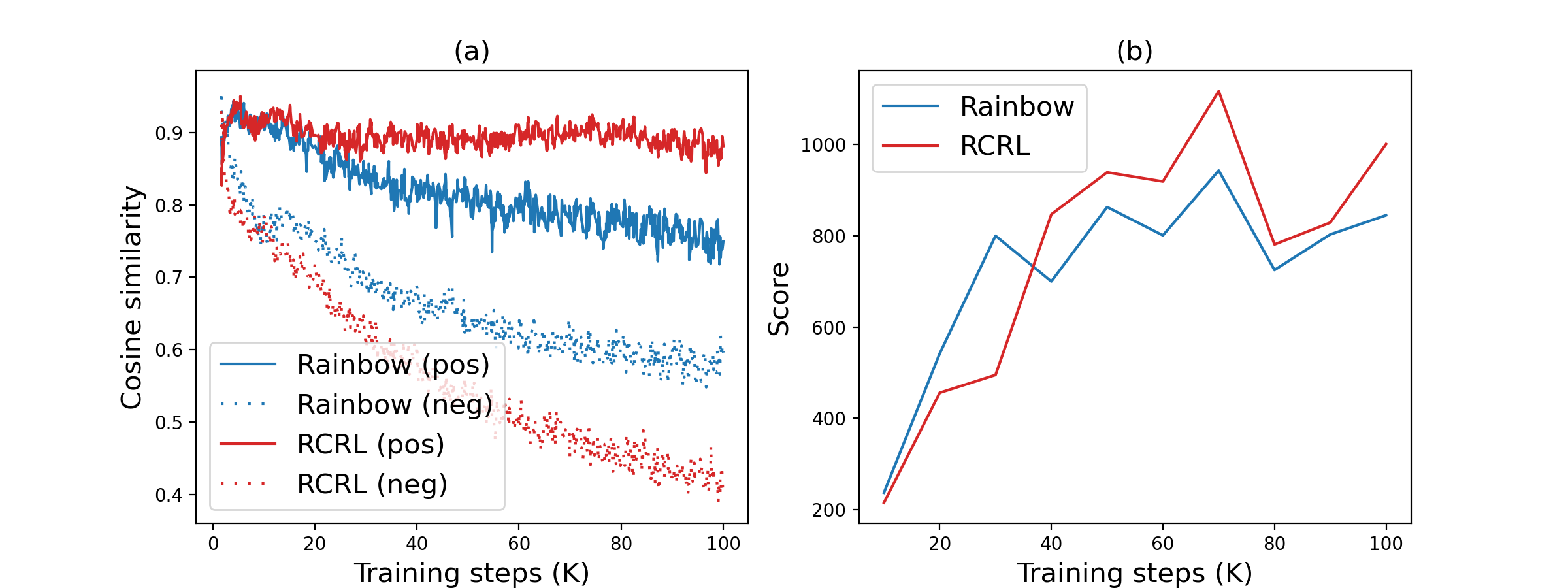}
    \includegraphics[width=0.5\textwidth]{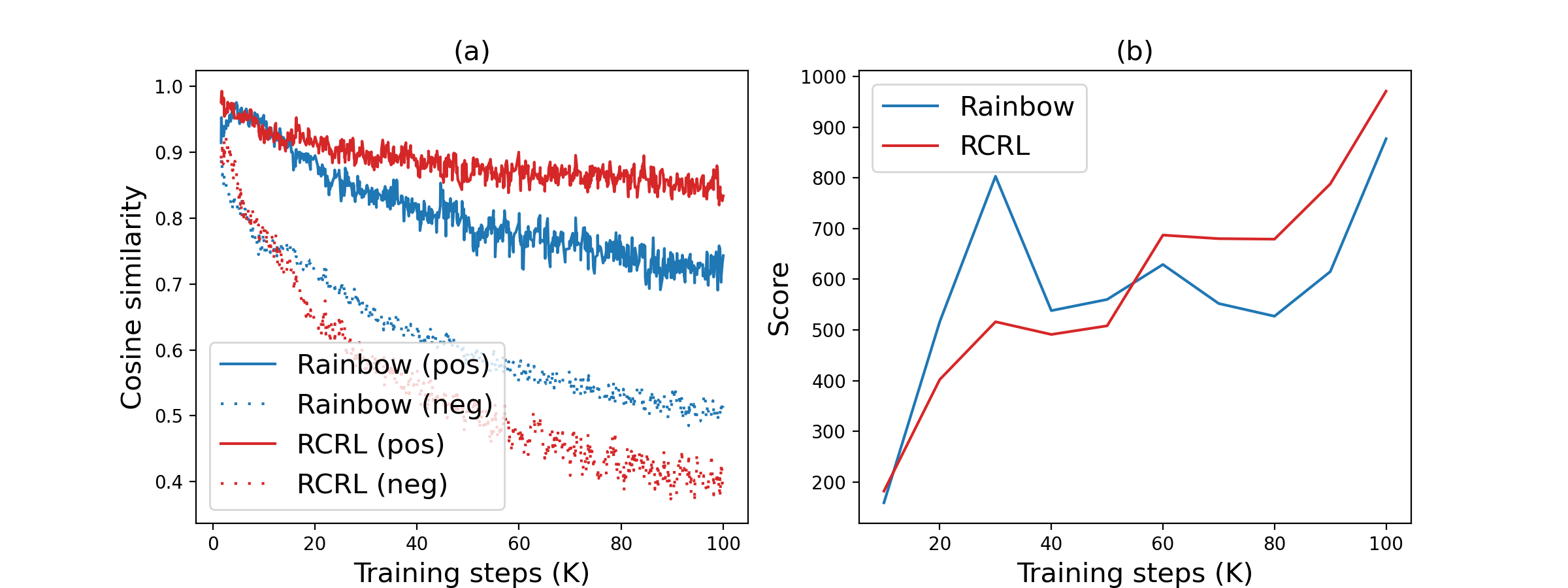}
    \caption{
\tcr{
Analysis of the learned representation on \emph{Alien} with five seeds.
(a) The cosine similarity between the representations of the positive/negative state-action pair and the anchor during the training of Rainbow and RCRL.
(b) The game scores of the two algorithms during the training.}
    }
    \label{fig:similarity_appendix}
\end{figure}

\newpage
\subsection{\tcr{High Data Regime Results}}
\tcr{To empirically study how applicable our model is to higher data regimes, we run the experiments on the first five Atari games (of Table~\ref{tab:Atari}) for 1.5 millon agent interactions. We show the evaluation results of both our algorithm and the rainbow baseline in Table~\ref{tab2:Atari}. We can see that RCRL outperforms the ERainbow-sa baseline for 4 out of 5 games, which may imply that our auxiliary task has the potential to improve performance in the high-data regime. }

\begin{table*}[ht]
\small
\centering
\begin{tabular}{ccccc}
\toprule
Game & ERainbow-sa~(100k) & RCRL~(100k) & ERainbow-sa~(1.5M) & RCRL~(1.5M)\\
\midrule
Alien & $813.8$ & $854.2$ & $1721$  & $\textbf{1824}$\\
Amidar & $154.2$ &  $157.7$ &  $ 398.8 $ &  $\textbf{454.5} $ \\
Assault & $576.2$ & $569.6$ & $ 572.5 $ & $ \textbf{757.9} $\\ 
Asterix  & $ 697 $ & $ 799 $ & $ \textbf{1370} $ & $ 1306.7 $ \\
Bank Heist & $96$ & $107.2$ & $257.3$ & $\textbf{550.7}$ \\
\bottomrule
\end{tabular}
\caption{\tcr{Scores of RCRL and ERainbow-sa on the first five Atari games.}}\label{tab2:Atari}
\end{table*}

\end{document}